\newtheorem{Assumption}{Assumption}
\newtheorem{Theorem}{Theorem}
\newtheorem{Proposition}{Proposition}
\newtheorem{Definition}{Definition}
\title{\LARGE \bf
Real-Time Area Coverage and Target Localization using Receding-Horizon Ergodic Exploration
}
\author{Anastasia Mavrommati, Emmanouil Tzorakoleftherakis, Ian Abraham and Todd D. Murphey
\thanks{Anastasia Mavrommati, Emmanouil Tzorakoleftherakis, Ian Abraham and Todd D. Murphey are with the Department of Mechanical Engineering, Northwestern University, 2145 Sheridan Road Evanston, IL 60208, USA 
        {\tt\small Email: stacymav@u.northwestern.edu; man7therakis@u.northwestern.edu; ianabraham2015@u.northwestern.edu;  t-murphey@northwestern.edu}}%
\thanks{ This material is based upon work supported by Army Research Office grant W911NF-14-1-0461 and by the National Science Foundation under awards CMMI-1200321 and IIS-1426961. Any opinions, findings, and
conclusions or recommendations expressed in this material are those of
the author(s) and do not necessarily reflect the views of the National
Science Foundation.}
}
\begin{document}


\maketitle
\thispagestyle{empty}
\pagestyle{empty}
\newcommand{\argmin}{\operatornamewithlimits{argmin}}

\begin{abstract}
	
Although a number of solutions exist for the problems of coverage, search and target localization---commonly addressed separately---whether there exists a unified strategy that addresses these objectives in a coherent manner without being application-specific remains a largely open research question. In this paper, we develop a receding-horizon ergodic control approach, based on hybrid systems theory, that has the potential to fill this gap. The nonlinear model predictive control algorithm plans real-time motions that optimally improve ergodicity with respect to a distribution defined by the expected information density across the sensing domain.  We establish a theoretical framework for global stability guarantees with respect to a distribution.  Moreover, the approach is distributable across multiple agents, so that each agent can independently compute its own control while sharing statistics of its coverage across a communication network.  We demonstrate the method in both simulation and in experiment in the context of target localization, illustrating that the algorithm is independent of the number of targets being tracked and can be run in real-time on computationally limited hardware platforms.
\end{abstract}

\section{INTRODUCTION}
  
This paper considers the problem of real-time motion planning for area search/coverage and target localization. Although the above operations are often considered separately, they essentially all share a common objective:  tracking a specified distribution of information across the terrain.  
Our approach differs from common solutions of space grid decomposition in area coverage \cite{6889862, stergiopoulos2013spatially, soltero2013decentralized, meuth2009adaptive, schwager2009decentralized} and/or information maximization in target localization \cite{passerieux1998optimal,skoglar2009information,bishop2008optimal,liao2004application,bourgault2003optimal,ponda2009trajectory,oshman1999optimization, grocholsky2003information} by employing the metric of \textit{ergodicity} to plan trajectories with spatial statistics that match the terrain spatial distribution in a  continuous manner. By following this approach, we can establish a unified framework that achieves simultaneous search and localization of multiple targets (e.g., localizing detected targets while searching for new targets when the number of total targets is unknown) without added complexity.    Previous work \cite{miller2015ergodic, o2015optimal,miller2015optimal} has suggested using ergodic control for the purpose of motion planning for search and localization (albeit separately). However, due to its roots to optimal control theory, ergodic control has been associated with high computational cost that makes it impractical for real-time operations with varying information distribution. The contribution of this work is a model predictive ergodic control algorithm that exhibits low execution times even for high-dimensional systems with complex nonlinear dynamics while  providing stability guarantees over both dynamic states and information evolution. To the best of our knowledge, this paper includes the first application of an online ergodic control approach in real-time experimentation---here, using the \texttt{sphero SPRK} robot \cite{sphero}.  

Ergodic theory relates the time-averaged behavior of a system to the set of all possible states of the system and is primarily used in the study of fluid mixing and communication. We use ergodicity to compare the statistics of a search trajectory to a terrain spatial distribution---the distribution may represent probability of detection for area search, regions of interest for area coverage and/or expected information density for target localization. To formulate the ergodic control algorithm, we employ hybrid systems theory to analytically compute the next control action that optimally improves ergodicity, in a receding-horizon manner. The algorithm---successfully applied to autonomous visual rendering in \cite{wafr}---is shown to be particularly efficient in terms of execution time and capable of handling high-dimensional systems. The overall contribution of this paper combines the following features in a coherent manner.

\textit{Real-time execution}: 
  
 In Sections \ref{ex1} and \ref{ex2}, we demonstrate in simulation real-time reactive control for a quadrotor model in $\text{SE} (3)$.  In Section~\ref{experiment}, we show in experimentation how an ergodically controlled \texttt{sphero SPRK} robot can localize projected targets in real time using bearing-only measurements. 

\textit{Adaptive performance}:  
The proposed algorithm reactively adapts to a changing distribution of information across the workspace. We show how a UAV can adapt to new target estimates as well as to increasing numbers of targets. 

\textit{Nonlinear agent dynamics}:   As opposed to geometric point-to-point approaches \cite{meuth2009adaptive, broderick2014optimal, garzon2015multirobot}, the proposed algorithm  controls agents with complex nonlinear dynamics, such as robotic fish \cite{tan2011autonomous, castano}, 12-dimensional unmanned aerial vehicles (UAVs) \cite{quigley2005target, 6178054}, etc., taking advantage of their dynamical features to achieve efficient area exploration. 

\textit{Stability of information states}: We establish  requirements for ergodic stability of the closed-loop system resulting from the receding-horizon strategy in Section~\ref{stab}.

\textit{Multi-objective control capacity}: The proposed algorithm can work in a shared control scenario by wrapping around controllers that implement other objectives. This dual control problem solution is achieved by encoding the non-information related (secondary mission) control signal as the nominal control input $u_i^{nom }$ in the algorithm (see Algorithm~\ref{rhee}). In the simulation examples, we show how this works by wrapping the ergodic control algorithm around a PD controller for height regulation in order to control a quadrotor to explore a terrain. 

\textit{Multi-agent distributability}: We show how ergodic control is distributable to $N>1$ agents with no added computational complexity. Each agent computes control actions \textit{locally} using their own processing unit but still shares information \textit{globally} with the other agents after each algorithm iteration. 

\textit{Generalizability and robustness in multiple-targets localization}: 
The complexity of sensor motion planning strategies usually scales with respect to the total number of targets, because a separate motion needs to be planned for each target \cite{tang2005motion, summers2009coordinated}. As opposed to this, the proposed ergodic control approach  controls the robotic agents to track a non-parameterized information distribution across the terrain instead of individual targets independently, thus being completely decoupled from the estimation process and independent of the number of targets.     
Through a series of simulation examples and Monte Carlo experimental trials, we show that ergodically controlled agents with limited sensor ranges can reliably detect and localize static and moving targets in challenging situations where a) the total number of targets is unknown;  b) a model of prior target behavior is not available; c) agents acquire bearing-only measurements; d) a standard Extended Kalman Filter (EKF) is used for bearing-only estimation.  

\textit{Joint area coverage and target localization}: Planning routes that simultaneously optimize the conflicting objectives of search and tracking is particularly challenging  \cite{jilkov2009fusion, pitre2012uav, pimenta2009simultaneous}. Here, we propose an ergodic control approach where the combined dual objective is encoded in the spatial information distribution that the statistics of the robotic agents trajectories must match.

\section{Related Work}

\subsection{Area Search and Coverage}
\label{back_cov}
An area search function is required by many operations, including search-and-rescue \cite{wong2011multiple,liu2013robotic}, hazard detection \cite{garzon2015multirobot}, agricultural spraying \cite{barrientos2011aerial}, solar radiation tracking \cite{plonski2013energy} and monitoring of environmental phenomena, such as water quality in lakes \cite{singh2009efficient}. In addition, complete area coverage navigation that requires the agent to pass through every region of the workspace \cite{galceran2013survey} is an essential issue for cleaning robots \cite{luo2008bioinspired, viet2013ba},  autonomous underwater covering vehicles \cite{yilmaz2008path, paull2013sensor}, demining robots \cite{acar2003path}, automated harvesters \cite{bac2014harvesting}, etc.  Although slightly different in concept, both applications---search and coverage---involve motion planning for tracking a distribution of information across the terrain. For purposes of area search, this terrain spatial distribution indicates probability of detection and usually exhibits high variability in both space and time as it dynamically responds to new information about the target's state. In area coverage applications, on the other hand, the terrain distribution shows regions of interest and is normally near-uniform with possible occlusions.

A number of contributions in the area of robotic search and coverage decompose the exploration space to reduce problem complexity. Grid division methods of various geometries, such as Voronoi divisions \cite{6889862, stergiopoulos2013spatially, soltero2013decentralized, meuth2009adaptive,schwager2009decentralized}, are  commonly employed to accomplish this. While these methods work well, their scalability to more complex and larger terrains where the number of discrete divisions increases, is a concern. In addition, existing methods plan paths that do not satisfy the robotic agents' dynamics and thus are not feasible system trajectories. This raises the need for an additional step where the path is treated as a series of waypoints and the agent is separately controlled to visit them all \cite{meuth2009adaptive, broderick2014optimal, garzon2015multirobot}. This double-step process---first, path planning and then, robot control---might result in hard-to-accomplish maneuvers for the robotic system. Finally, decomposition methods often do not respond well to dynamically changing environments---for example when probability of detection across the workspace changes in response to incoming data---because grid updates can be computationally intensive. Therefore, most existing solutions only perform non-adaptive path planning for search and coverage offline---i.e., when the distribution of information is known and constant\footnote{To overcome this issue when monitoring environments with changing distributions, an alternative solution is to control only the speed of the robotic agents over a predefined path \cite{smith2012persistent}.}. As opposed to this, the algorithm described in this paper does not require decomposition of the workspace, by representing probability of detection as a continuous function over the terrain. Furthermore, it performs online motion planning by reactively responding to changes in the terrain spatial distribution in real time, while taking into account the agent's dynamics.

\textit{Multi-agent Coordinated Coverage:}

The objective of multi-agent coverage control  is to  control the
agents’ motion in order to collectively track a spatial probability-of-detection density function \cite{cao2013overview} across the terrain.  Shared information is a necessary condition for coordination \cite{ren_multi}.  Several promising coverage control algorithms for mobile sensor networks have been proposed. In most cases, the objective is to control the agents to move to a static  position that optimizes detection probability $\Phi$, i.e., to compute and track the final states $x_{\zeta}(t_f)$   that   maximize the sum of  \mbox{$\int \mathcal{S}(\|x-x_{\zeta}(t_f)\|) \Phi(x) dx $} over all agents $\zeta$ where $\mathcal{S}$ indicates sensor performance. Voronoi-based gradient descent \cite{cortes2005spatially, kantaros2015distributed}  is a popular approach but it can converge to local maxima. Other approaches employ cellular decomposition \cite{rutishauser2009collaborative}, simulated annealing  \cite{kwok2011distributed}, game theory \cite{zhu2013distributed, 7383274} and ergodic dynamics \cite{shell2006ergodic} to achieve distributed coverage.  The main drawback is that existing algorithms do not consider time-dependent density functions $\Phi$, so they are not suitable for realistic applications where probability of detection varies. 
Ergodic multi-agent coverage described in this paper differs from common coverage solutions in that it aims to  control the agents so that the spatial statistics of their full trajectories---instead of solely their final states---optimize detection probability, i.e., the time-averaged integral $C(x) = \frac{1}{t_f-t_0}\int_{t_0}^{t_f} \delta[x-x_{\zeta}(t)] dt$, where $\delta$ is the Dirac delta,  matches the spatial distribution $\Phi(x)$  as $t_f \rightarrow \infty$. This means that if we capture a single snapshot of the agents' ergodic motion,  there is no guarantee that their current configuration will be maximizing the probability density. However, as time progresses the network of agents is bound to explore the terrain as extensively as possible by being ergodic with respect to the terrain distribution. An advantage of this approach compared to other coordinated coverage solutions, is that it can be performed online in order to cover terrains with time-dependent (or sensed in real time) density functions.

\subsection{Motion Planning for Target Localization }

\label{back_loc}

Target localization\footnote{While often ``localization'' refers to static targets and ``tracking'' is used for moving targets, here for consistency we will use the term ``localization'' to describe estimation of both static and moving targets. } refers to the process of acquiring and using sensor measurements to estimate the state of a single or multiple targets. One of the main challenges involved with target localization is developing a sensor trajectory such that high information measurements are acquired. To achieve this, some methods perform information maximization (IM) \cite{passerieux1998optimal,skoglar2009information,bishop2008optimal,liao2004application,bourgault2003optimal,ponda2009trajectory,oshman1999optimization, julian2012distributed} usually by compressing an information metric (such as Fisher Information Matrix (FIM) \cite{6178054,wilson2017dynamic} and  Bayesian utility functions \cite{bourgault2003optimal}) to a scalar cost function (e.g., using the determinant \cite{oshman1999optimization}, trace \cite{ponda2009trajectory}, or eigenvalue \cite{leung2005trajectory, 7570171} of the information matrix)  and then generating trajectories that optimize this cost. The most general approaches to solving these problems involve numerical techniques \cite{helferty1993optimal}, classical optimal control theory \cite{passerieux1998optimal}, and  dynamic programming \cite{singh2009efficient,cao2013multi}, which tend to be either  computationally intensive or application specific (e.g., consider only static and/or constant velocity targets). Compared to IM techniques \cite{passerieux1998optimal,skoglar2009information,bishop2008optimal,liao2004application,bourgault2003optimal,ponda2009trajectory,oshman1999optimization, julian2012distributed}, the proposed algorithm   explores an expected information density map by optimally improving ergodicity, thus being more time-efficient and more robust to the existence of local maxima in information. In addition, it allows tracking multiple moving targets without added complexity as opposed to most IM techniques that would need to plan a motion for each target separately. To overcome this issue with IM techniques, Dames et al. \cite{dames2015detecting, dames2015autonomous} propose an estimation filter that estimates the targets' density---instead of individual labeled targets---over time,  thus rendering IM complexity independent of the number of targets. However, the proposed trajectory generation methodology relies on exhaustive search requiring  discretization of the controls space.

Because IM techniques tend to exhibit prohibitive  execution times for moving targets, alternative methods of diverse nature have been proposed for use in real-world applications. A non-continuous grid decomposition strategy for  planning parameterized paths for UAVs is proposed in \cite{yu2015cooperative} with the objective to localize a single target by maximizing the probability of detection when the target motion is modeled as a Markov process. Standoff tracking techniques are commonly used to control the agent to achieve a desired standoff configuration from the target usually by orbiting around it \cite{kim2013nonlinear, anderson2011stochastic, summers2009coordinated}. A probabilistic planning approach  for localizing a group of targets using vision sensors is detailed in \cite{spletzer2003dynamic}. In \cite{zhu2013ground}, a UAV is used to track a target in constant wind considering control input constrains, but the planned path is predefined to converge  to a desired circular orbit around the target. A rule-based guidance strategy
for localizing moving targets is introduced in \cite{zengin2007real}. In \cite{yao2015real}, the problem of real-time path planning for tracking a single target while avoiding obstacles is addressed through a combination of methodologies. In general, as opposed to ergodic exploration, the above approaches focus on and are only applicable in special real-world situations and do not generalize directly to general multiple-target tracking situations.  For a complete and extensive comparison of ergodic localization to other motion planning approaches, the reader is referred to \cite{miller2015ergodic}.

In this paper, we use as an example the problem of bearing-only localization. Many real-world systems use angle-only sensors for target localization, such as submarines with passive sonar, mobile robots with  directional radio antenna \cite{tokekar2011active}, and unmanned aerial vehicles (UAVs) using images from an optical sensor  \cite{ross2014stochastic}. Bearing-only systems require some amount of maneuver to measure range with minimum uncertainty \cite{passerieux1998optimal}. The majority of existing solutions for UAV bearings-only target motion planning, produce circular trajectories above the target's  estimated position \cite{barber2006vision, quigley2005target}. However, this solution is only viable if there is low uncertainty over the target's position. In addition, if the target is moving, the operator may not know what the best vehicle path is. In this paper, this drawback is overcome by exploring an expected information density that is updated in real time while the targets are moving based on the current targets' estimate.  

\textit{Cooperative Target Localization:} 
The  greatest body of work in the area of cooperative target localization is comprised by standoff tracking techniques. Vector fields \cite{summers2009coordinated}, nonlinear feedback \cite{ma2013cooperative} and nonlinear model predictive control \cite{kim2013nonlinear} are some of the control methodologies that have been used for achieving the desired standoff configuration for a target.  The motion of the robotic agents is coordinated in a geometrical manner: two robotic agents orbit the target at a nominal standoff distance and maintain a specified angular separation; when more agents are considered, they are controlled to achieve a uniform angular separation on a circle around the target.  A dynamic programming technique that minimizes geolocation error covariance is proposed in  \cite{quintero2016stochastic}. However, the solution is not generalizable to multiple robotic agents and targets.  The robots are controlled to seek informative observations
by moving along the gradient of mutual information in \cite{julian2012distributed}. The main concern in using these strategies is scalability to multiple targets, as the robots are separately controlled to fly around each single target \cite{gu2006optimal}. To overcome this issue, we propose an algorithm that tracks a non-parameterized information density across the terrain instead of individual targets independently, thus being completely decoupled from the estimation process and the number of targets.

\begin{table*}[!t]
	\renewcommand{\arraystretch}{1.3}
	\caption{List of Variables}
	\label{table_example}
	\centering
	\resizebox{6.9in}{!}{
	\begin{tabular}{|||c | c ||| c | c ||| c | c |||}
		\hline
		\bfseries Symbol & \bfseries Description & \bfseries Symbol & \bfseries  Description  & \bfseries Symbol & \bfseries Description\\
		\hline\hline
		$x$ & system dynamic states &	$J_{\mathcal{E}}$ & receding-horizon ergodic cost & $u^*_{s}$ & schedule of candidate infinitesimal actions \\
		$u$ & system inputs & $ t_0^{erg}$ & initial time of ergodic exploration & $M_{erg}$ & ergodic memory\\
		$n$ & number of dynamic states & $t_s$ & algorithm sampling time & 	$N$ & number of agents on the field\\
		$m$ & number of system inputs & $ Q$ & weight on ergodic cost & 	$\zeta$ & agents performing exploration \\
		$\nu$ & number of ergodically explored states ($\nu \leq n$) & 	$ R $ & weight on control effort & $M$ & number of single-target coordinates\\
		$ L_i$ & $i$-th dimension of search domain & $\rho_{\mathcal{E}}$ & ergodic costate/adjoint & 	$\boldsymbol{\alpha}$ & single-target parameters to be estimated\\
		$\Phi(\cdot)$ & spatial distribution on the search domain & 	$ \alpha_d $ & desired rate of change & 	$z$ & sensor measurement  \\ 
		$ \phi_k$ & Fourier coefficients of $\Phi(\cdot)$ & 	$ u ^{nom} $ & nominal control & 	$\mu$ & number of sensor measured quantities \\
		$ c_k^i$ &  Parameterized trajectory spatial statistics at time step $t_i$ & 	$ u_i^*(t), x_i^*(t)$ & open-loop control and state trajectories at time step $t_i$ & 	$\Upsilon(\cdot)$ & deterministic sensor measurement model \\	
		$K$ & highest order of Fourier coefficients & 	$ u^{\text{\emph{def}}}(t) , x^{\text{\emph{def}}}(t)$ & default control and resulting state trajectory & 	$\mathcal{I}$ & Fisher Information matrix \\
		$k$ & set of $\nu$ coefficient indices $\{k_1,k_2,...,k_\nu\}$ & 	$\bar{x}_{cl}$ & closed-loop state trajectory & $\Sigma$ & covariance of measurement model\\
		$F_k$ & $k$-th Fourier basis function & 	$u_A, \lambda_A, \tau_A$ & magnitude, duration and application time of action $A$ & 	$\mathbf{\Phi}(\cdot)$ & expected information matrix \\
		$ T $ & open-loop time horizon & 	$\mathcal{C}_\mathcal{E}$ & cost contractive constraint & $\mathcal{F}(\cdot)$  & target dynamics\\
		\hline
	\end{tabular}}
\end{table*}

\subsection{Ergodic Control Algorithms}

There are a few other algorithms that perform ergodic control in the sense that they optimize the ergodicity metric in \eqref{metric}. 
Mathew and Mezi{\'c} in \cite{mathew2011metrics} derive analytic ergodic control formulas for simple linear dynamics (single and double integrator) by minimizing the Hamiltonian \cite{kirk2012optimal} in the limit as the receding time horizon goes to zero. Although closed-form, their solution is not generalizable to arbitrary nonlinear system dynamics and it also augments the state vector to include the coefficients difference so that the final system dimensionality is nominally infinite.  Miller et al. \cite{miller2013trajectory} propose an open-loop  trajectory optimization technique using a finite-time horizon. This algorithm is ideal for generating optimal  ergodic solutions with a prescribed time window. However, it exhibits relatively high computational cost that does not favor real-time algorithm application in a receding-horizon format. This approach has been used for offline receding-horizon exploration of unknown environments \cite{o2015optimal} and localization of a single static target  \cite{miller2015ergodic,miller2015optimal} in cases where real-time control is not imperative. De La Torre et al. \cite{7525371} propose a stochastic differential dynamic programming algorithm for ergodic exploration in the presence of stochastic sensor dynamics. 
The ergodic control algorithm in this paper differs from these existing approaches in that it employs hybrid systems theory to perform ergodic exploration fast, in real time, while adaptively responding to changes in the information distribution as required for tracking moving targets. 

\section{Ergodicity metric}
\label{erg_section}

 For area coverage and target localization using ergodic theory, the objective is to control an agent so that the amount of time spent in any given area of a specified search domain is proportional to the integral of a spatial distribution over that same domain. This section describes an ergodicity metric that satisfies this objective.

Consider a search domain that is a bounded $\nu$-dimensional workspace $\mathcal{X}_\nu \subset \mathbb{R}^{\nu}$ defined as $[0, L_1] \times [0, L_2] \times ... \times [0,L_\nu]$ with $\nu\leq n$, where $n$ is the total number of the system dynamic states.
If $s \in \mathcal{X}^{\nu}$ denotes a point in the search domain, the spatial distribution over the search domain is denoted  as $\Phi(s): \mathcal{X}^{\nu} \rightarrow \mathbb{R}$, and it can represent probability of detection in search area coverage operations, such as search-and-rescue, surveillance, inspection etc. or expected information density in target localization tasks as in Section~\ref{localization}.   
The spatial statistics of a trajectory $x_\nu(t)$ are quantified by the
percentage of time spent in each region of the workspace as
\begin{equation}
\label{xstat}
C(s) = \frac{1}{T} \int\limits_{t_0}^{t_0+T} \delta[s-x_\nu(t)] dt
\end{equation}
where $\delta$ is the Dirac delta.  We use the distance from ergodicity between the spatial statistics of the time-averaged trajectory and the terrain spatial distribution as a metric. To drive the spatial
statistics of a trajectory $x_\nu(t)$ to match those of the distribution
$\Phi(s)$, we need to choose a norm on the difference between
the distributions $\Phi(s)$ and $C(s)$. As in \cite{miller2015ergodic}, we quantify the difference
between the distributions, i.e., the \textit{distance from ergodicity},
using the sum of the weighted squared distance between
the Fourier coefficients $\phi_k$ of $\Phi(s)$, and the coefficients $c_k$
of the distribution $C(s)$ representing the time-averaged trajectory. In particular, the Fourier coefficients $\phi_k$ and $c_k$ are calculated respectively as
\begin{equation}
\label{phik}
\phi_k =  \int\limits_{\mathcal{X}^{\nu}} \Phi(x_\nu) F_k(x_{\nu}) dx_\nu
\end{equation} 
and
\begin{equation}
\label{info_states}
c_k = \frac{1}{T} \int\limits_{t_0}^{t_0+T} F_k(x_{\nu}(t)) dt
\end{equation} 
where $F_k$ is a Fourier basis function, as derived in \cite{mathew2011metrics}, and $T$ is the open-loop time horizon. 
Here, we use the following choice of basis function: 
\begin{equation}
F_k(s) = \frac{1}{h_k} \prod\limits_{i=1}^{\nu} \cos\bigg(\frac{k_i \pi}{L_i} s_i\bigg), 
\end{equation}
where $k \in \mathcal{K}$ is a set of $\nu$ coefficient indices $\{k_1,k_2,...,k_\nu\}$ with $k_i \in \mathbb{N}$ so that $\mathcal{K} = \{ k \in \mathbb{N}^{\nu}: 0 \leq k_i \leq K \}$,
$K \in \mathbb{N}$ is the highest order of coefficients calculated along each of the $\nu$ dimensions, and $h_k$ is a normalizing factor \cite{mathew2011metrics}. It should be noted,
however, that any set of basis functions that is differentiable in the state and can be evaluated along the trajectory can be used in the derivation of the ergodic metric. 
Using the above, the ergodic metric on $x_\nu \in \mathcal{X}^\nu$ is defined as in \cite{miller2015ergodic, miller2013trajectory,mathew2011metrics}  
\begin{equation}
\label{metric}
\mathcal{E}(x_\nu(t)) = \sum\limits_{k \in  \mathcal{K}} \Lambda_k [c_k(x_\nu(t))-\phi_k]^2
\end{equation}
with $\Lambda_k=\frac{1}{(1+||k||^2)^s}$ and $s=\frac{\nu+1}{2}$, which places larger weight on lower frequency information so that when $K \rightarrow \infty$ the series converges.

\section{Receding-horizon ergodic exploration}

\subsection{Algorithm Derivation}
\label{main_ergodic}
We shall consider nonlinear control affine systems with input constraints such that 
\begin{gather}
\label{f}
\dot x = f(t,x,u) = g(t,x) + h(t,x) \, u \;\;\;\forall t \\
\text{with\;\;} u\in\mathcal{U} \;\;\text{and}\notag\\
\begin{aligned}
\mathcal{U}:=\Big\{ u\in\mathbb{R}^m: u_{\text{\emph{min}}}\leq u \leq u_{\text{\emph{max}}},\; u_{\text{\emph{min}}} < 0 < u_{\text{\emph{max}}}\Big\} \notag \text{,}
\end{aligned}
\end{gather}
i.e., systems that can be nonlinear with respect to the state vector,  \mbox{$x:\mathbb{R}\to\mathcal{X}$}, but are assumed to be linear (or linearized) with respect to the control vector,  \mbox{$u:\mathbb{R}\to\mathcal{U}$}. The state will sometimes be denoted as $t \mapsto x\big(t;x(t_i), u(\cdot)\big)$ when we want to make explicit the dependence on the initial state (and time), and corresponding control signal.
Using the metric \eqref{metric}, receding-horizon ergodic control must optimally improve the following cost at each time step $t_i$:
\begin{equation}
\label{ergodic_cost}
J_{\mathcal{E}} = Q \sum\limits_{k \in  \mathcal{K}} \Lambda_k \bigg[\underbrace{\frac{1}{t_i+T-t_0^{erg}}\int\limits_{t_0^{erg}}^{t_i+T}{F_k(x(t)) dt}}_{c_k^i}-\phi_k \bigg]^2
\end{equation}
where $t_0^{erg}$ is   the user-defined initial time of ergodic exploration, $x\in \mathcal{X}^\nu$ and $Q \in \mathbb{R}$ weights the ergodic cost against control effort weighted by $R$ in \eqref{Ju}. Henceforth, for brevity we refer to the trajectory of the set of states to be ergodically explored as $x(t)$ instead of $x_\nu(t)$, although it should be clear that the ergodically explored states might or might not be all the states of the system dynamics (i.e., $\nu\leq n$).

To understand the challenges of optimizing \eqref{ergodic_cost}, we  distinguish between the dynamic states of the controlled system, $x\in \mathbb{R}^n$---e.g., the 12 states denoting position and heading, and their velocities,  in quadrotor dynamics---and the information states \mbox{$c_k(x(\cdot)) $} in \eqref{info_states}, i.e., the parameterized time-averaged statistics of the followed trajectory over a finite time duration. The main difficulty in optimizing ergodicity is that the ergodic cost functional in \eqref{ergodic_cost} is non-quadratic  and does not follow the Bolza form---consisting of a running and terminal cost \cite{liberzon2012calculus}---with respect to the dynamic states. To address this, infinite-dimensional trajectory optimization methods that are independent of the cost functional form have been employed \cite{miller2015ergodic} to optimize ergodicity. However, the computational cost of such iterative methods is prohibitive for real-time control in a receding-horizon approach. Another method involves change of coordinates so that the cost functional is rendered quadratic with respect to the information states parameterized by Fourier coefficients. This allows the use of traditional optimal control approaches e.g., LQR, DDP, SQP etc. (see for example \cite{7525371}). However, this approach entails optimization over an extended set of states (the number of parameterized information states is usually significantly  larger than the dynamic states) which inhibits real-time execution. In addition, and perhaps more importantly, defining a running cost on the information states results in unnecessarily repetitive integration of the dynamic state trajectories. To avoid this, an option would be to optimize a terminal cost only, but this proves problematic in establishing stability of Model Predictive Control (MPC) algorithms (see \cite{fontes2001general}).

 To overcome the aforementioned issues, we seek to formulate an algorithm that a) computes control actions that guarantee contraction of the ergodic cost at each time step b) naturally uses current sensor feedback to compute controls fast, in real time. For these reasons,  we choose to frame the control problem as a hybrid control problem, similarly to Sequential Action Control (SAC) in \cite{sactro, tzorakoleftherakis2016model}. By doing this, we are able to formulate an ergodic control algorithm that is rendered fast enough for real time operation---as opposed to traditional model predictive control algorithms that are usually computationally expensive \cite{mayne2014model}---for two main reasons: a) a single control action is calculated at every time step using a closed-form algebraic expression and b) this control action aims to optimally improve ergodicity (instead of optimizing it) by an amount that guarantees stability with respect to $x$ and $c_i$.

\begin{algorithm}
	\normalsize{
		\caption{ Receding-horizon ergodic exploration (RHEE)}
		\label{rhee} 
		\hrule
		
		\vspace{2 mm}
		\textit{\textbf{Inputs:}} initial time $t_0$, initial state $x_0$, terrain spatial distribution $\Phi(x)$, ergodic initial time $t_0^{erg}$, final time $t_f$\\     
		\textit{\textbf{Output:}} closed-loop ergodic trajectory $\bar{x}_{cl}: [t_0,t_f]\rightarrow \mathcal{X}$
		
		\hrule
		\vspace{2 mm}
		
		\textbf{Define} ergodic cost weight $Q$, highest order of coefficients $K$, control weight $R$, search domain bounds $\{L_1,...,L_{\nu}\}$, sampling time $t_s$, desired rate of change $\alpha_d$, time horizon $T$. \\
		
		\textbf{Initialize} nominal control $u^{nom}$,  step $i=0$.\\  
		\setstretch{0.5}    
		\begin{itemize}[leftmargin=*]
			\item Calculate $\phi_k$ using \eqref{phik}.  \\
			\item While $t_i<t_f$\\
			\setstretch{0.2}
			\begin{itemize}
				\item Solve open-loop problem $\mathcal{P}_\mathcal{E}(t_i,\,x_{i}, T)$ to get $u_i^*$: 
				\setstretch{0.9}     
				\begin{enumerate}
					\item Simulate system \eqref{f} for $t \in [t_i,t_i+T] $ under $u_i^{def}$ to get $x(t)$. 
					\item Simulate \eqref{ergodic_rhodot} for $t \in [t_i,t_i+T] $.
					\item Compute $u^*_{s}$ using \eqref{uopt}.
					\item Determine action application time $t_A$ and value $u_A$ by minimizing \eqref{Jt}.
					\item Determine action duration $\lambda_A$ using the line search process in Section \ref{howlong} and the condition in \eqref{improve_condition} with $\mathcal{C}_{\mathcal{E}}$ in \eqref{ergodic_condition}.               
				\end{enumerate}
				
				\setstretch{0.9}
				\item Apply $u_i^*$ to \eqref{f} for $t\in [t_i,t_i+t_s]$ to get $\bar{x}_{cl} \forall t\in [t_i,t_i+t_s]$.
				\item Define $t_{i+1} =  t_i+t_s$, $x_{i+1} =  \bar{x}_{cl}(t_{i+1})$.
				\item $i \leftarrow i+1$      
			\end{itemize}
			\setstretch{0.9}
			end while
			
		\end{itemize}

	}
\end{algorithm}

An overview of the algorithm is given in Algorithm~\ref{rhee}. Once the Fourier coefficients $\phi_k$ of the spatial distribution $\Phi(x)$ have been calculated, the  algorithm follows a receding-horizon approach; controls are obtained by repeatedly solving online an open-loop ergodic control problem $\mathcal{P}_{\mathcal{E}}$ every $t_s$ seconds (with sampling frequency 1/$t_s$), every time using the current measure of the system dynamic state $x$. The following definitions are necessary before introducing the open-loop problem.

\begin{Definition}
	An action $A$ is defined by the triplet consisting of a control's value, $u_A \in \mathcal{U}$, application duration, \mbox{$\lambda_A \in \mathbb{R^+}$} and application time, $\tau_A \in \mathbb{R}$, such that \mbox{$A:=\{u_A, \lambda_A, \tau_A\}$}. 
\end{Definition}

\begin{Definition}
	Nominal control $u^{\text{\emph{nom}}}: \mathbb{R} \rightarrow \mathcal{U}$, provides a nominal trajectory around which the algorithm provides feedback. When applying ergodic control as a standalone controller,  $u^{\text{\emph{nom}}}(\cdot)$ is either zero or constant. Alternatively, $u^{\text{\emph{nom}}}(\cdot)$ may be an optimized feedforward or state-feedback controller.
\end{Definition}

The open-loop problem $\mathcal{P_\mathcal{E}}$ that is solved in each iteration of the receding horizon strategy can now be defined as follows\footnote{From now on, subscript $i$ will denote the $i$-th time step, starting from $i=0$.}. 
\begin{align}
\label{open_loop_problem}
&\mathcal{P_\mathcal{E}}(t_i,\,x_{i}, T):\\
&\text{Find action } A \text{ such that} \notag \\
&J_\mathcal{E}\big(x(t;x_{i}, u_i^*(\cdot))\big)-J_\mathcal{E}\big(x(t;x_{i}, u_i^{\text{\emph{def}}}(\cdot))\big) < \mathcal{C}_\mathcal{E} \label{improve_condition}\\
&\text{subject to} \notag \\
&u_i^*(t) = \begin{cases} 
u_{A} & \tau_A\leq t\leq \tau_A+\lambda_A \\
u_i^{\text{\emph{def}}}(t) & \text{else}
\end{cases} \text{,} \notag\\
&\text{and }\eqref{f} \text{ with } t\in[t_i,t_i+T] \text{ and }x(t_i)=x_{i} \text{.} \notag
\phantom{\hspace{3cm}}
\end{align}
where $\mathcal{C}_\mathcal{E}$ is a quantity that guarantees stability (see Section~\ref{stab}) and $u_i^{def}$ and $x_i^{def}$ are defined below. 
\begin{figure}[t!]
	\centering
	\includegraphics[width=3.1in]{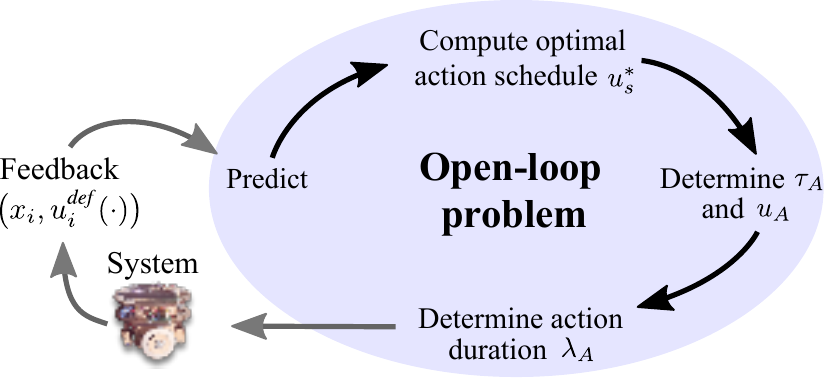}
	\caption{An overview of the ergodic control process. One major difference between the proposed ergodic control algorithm and traditional MPC approaches is that the open-loop problem can be solved without employing nonlinear programming solvers \cite{wachter2006implementation} by using hybrid systems theory. In order to solve \eqref{open_loop_problem}, the algorithm follows four steps as illustrated above. }
	\label{fig:isac_process}
\end{figure}
\begin{Definition}
	Default control \mbox{$u_i^{\text{\emph{def}}}: [t_i, t_i+T] \rightarrow \mathcal{U}$}, is defined as 
	\begin{align}
	\label{default_control}
	u_i^{\text{\emph{def}}}(t) = \begin{cases} 
	u_{i-1}^*(t) & t_i \leq t\leq t_i+T-t_s \\
	u_i^{\text{\emph{nom}}}(t) & t_i+T-t_s < t\leq t_i+T
	\end{cases} \text{,}
	\end{align}
\end{Definition}
with \mbox{$u_0^{def}(\cdot) \equiv u_0^{\text{\emph{nom}}}(\cdot)$}, $u_{i-1}^*: [t_{i-1},t_{i-1}+T] \rightarrow \mathcal{U}$ the output  of  $\mathcal{P}_{\mathcal{E}}(t_{i-1}, x_{i-1}, T)$ from the previous time step $i-1$---corresponding to $x_{i-1}^*(\cdot)$---and $t_s=t_i-t_{i-1}$ the sampling period (Fig.~\ref{fig:isac_process}). The system trajectory corresponding to application of default control will be denoted as $x\big(t;x(t_i), u^{\text{\emph{def}}}(\cdot)\big)$ or $x^{\text{\emph{def}}}(\cdot)$ for brevity.

The structure of $u_i^*(t)$ in \eqref{open_loop_problem} is a design choice that allows us to compute a single control action $A$, since $u_i^{\text{\emph{def}}}(t)$ is known at each time step $t_i$. As pointed out in the second paragraph of Section~\ref{main_ergodic}, this single-action control problem renders the algorithm fast enough for real-time execution. Moreover, the structure of default control $u_i^{\text{\emph{def}}}(t)$ in \eqref{default_control} indicates that the algorithm stores actions calculated in the previous time steps (included in $u_{i-1}^*(t)$).  By defining default control this way, and thus storing past actions, we are able to rewrite \eqref{improve_condition} as a constraint on sequential open-loop costs, as in \eqref{equiv_condition}, in order to show stability.  

The following proposition is necessary before going though the steps for solving $\mathcal{P}_{\mathcal{E}}$. 

\begin{Proposition}
	\label{new_dJ_dl}
	Consider the case where the system \eqref{f} evolves according to
	default control dynamics $f\big(t,x(t),u_i^{\text{\emph{def}}}(t)\big)$, and  action $u_A$ is applied at time $\tau$ (dynamics switch to $f\big(t,x(t),u_A)$) for an infinitesimal duration $\lambda \rightarrow 0$ before switching back to default control. In this case, the ergodic mode insertion gradient $\frac{\partial J_{\mathcal{E}}}{\partial \lambda}$ evaluated at $t=\tau$ measures the first-order sensitivity of the
	ergodic cost \eqref{ergodic_cost} to infinitesimal application of control action $u_A$ and is calculated as
	\begin{equation}
	\label{mode2}
	\frac{\partial J_{\mathcal{E}}}{\partial \lambda}\bigg|_{\tau} = \rho_{\mathcal{E}}(\tau) \; \Big[ f(\tau,x_i^{\text{\emph{def}}}(\tau),u_A)-f(\tau,x_i^{\text{\emph{def}}}(\tau),u_i^{def}(\tau)) \Big]
	\end{equation}
	with 
	\begin{gather}
	\label{ergodic_rhodot}
	\dot \rho_{\mathcal{E}} = -\ell(t,x_i^{\text{\emph{def}}})^T - D_{x}f\big(t,x_i^{\text{\emph{def}}},u_i^{\text{\emph{def}}}\big)^T \cdot \rho_{\mathcal{E}} \\
	\text{subject to\;\;} \rho_{\mathcal{E}}(t_i+T) = \mathbf{0} \notag \\
	\text{and  \;\; } \ell(t,x) = \frac{2 Q }{t_i+T-t_0^{erg}} 
	\sum\limits_{k\in  \mathcal{K}} \bigg \{\Lambda_k \big[ c_k^i-\phi_k  \big]
	\frac{\partial F_k(x(t))} {\partial x(t)}\bigg\}.\notag
	\end{gather}
	
\end{Proposition}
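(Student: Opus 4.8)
The plan is to treat the perturbed ergodic cost as a function of the insertion duration $\lambda$ alone, compute its one-sided derivative at $\lambda=0$ by a needle-variation argument, and then use the costate $\rho_{\mathcal{E}}$ to collapse the resulting time integral into the pointwise expression \eqref{mode2}. First I would make the perturbed trajectory precise: fix the switching time $\tau\in[t_i,t_i+T)$ and let $x_\lambda(\cdot)$ be the solution of \eqref{f} on $[t_i,t_i+T]$ that agrees with $x_i^{\text{\emph{def}}}(\cdot)$ on $[t_i,\tau]$, obeys $\dot x=f(t,x,u_A)$ on $[\tau,\tau+\lambda]$, and reverts to $\dot x=f(t,x,u_i^{\text{\emph{def}}})$ on $[\tau+\lambda,t_i+T]$, so that $x_0(\cdot)\equiv x_i^{\text{\emph{def}}}(\cdot)$. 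Standard continuous-dependence (Gronwall) estimates give $x_\lambda\to x_i^{\text{\emph{def}}}$ uniformly as $\lambda\to 0^+$, which lets every Jacobian below be evaluated along the default trajectory.

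The needle-variation computation then produces the state sensitivity $z(t):=\partial x_\lambda(t)/\partial\lambda\big|_{\lambda=0^+}$: one gets $z(t)=\mathbf 0$ for $t\le\tau$; a jump at the switch,
\[ z(\tau^+)=f\big(\tau,x_i^{\text{\emph{def}}}(\tau),u_A\big)-f\big(\tau,x_i^{\text{\emph{def}}}(\tau),u_i^{\text{\emph{def}}}(\tau)\big), \]
arising from integrating the difference of the two vector fields over the vanishing interval $[\tau,\tau+\lambda]$; and, for $t>\tau$, the variational equation $\dot z=D_xf\big(t,x_i^{\text{\emph{def}}},u_i^{\text{\emph{def}}}\big)\,z$.

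Next I would differentiate the cost. Since $J_{\mathcal{E}}$ in \eqref{ergodic_cost} equals $Q\sum_{k\in\mathcal{K}}\Lambda_k\,[c_k^i-\phi_k]^2$ and depends on $\lambda$ only through $c_k^i=\frac{1}{t_i+T-t_0^{erg}}\int_{t_0^{erg}}^{t_i+T}F_k(x_\lambda(t))\,dt$, differentiating the outer square and passing $\partial/\partial\lambda$ under the integral (justified by smoothness of the $F_k$ and of $t\mapsto x_\lambda(t)$) gives $\partial J_{\mathcal{E}}/\partial\lambda\big|_0=\int_{t_0^{erg}}^{t_i+T}\ell\big(t,x_i^{\text{\emph{def}}}(t)\big)\,z(t)\,dt$, with $\ell$ exactly the running term appearing in \eqref{ergodic_rhodot}; because $z\equiv\mathbf 0$ on $[t_0^{erg},\tau]$ the lower limit collapses to $\tau$. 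Finally, introducing the costate $\rho_{\mathcal{E}}$ defined by the terminal-value adjoint equation in \eqref{ergodic_rhodot}, the identity $\frac{d}{dt}\big(\rho_{\mathcal{E}}^\top z\big)=\dot\rho_{\mathcal{E}}^\top z+\rho_{\mathcal{E}}^\top\dot z=-\ell\,z$ holds on $(\tau,t_i+T)$ by construction of $\dot\rho_{\mathcal{E}}$ and $\dot z$; integrating over $[\tau,t_i+T]$, using $\rho_{\mathcal{E}}(t_i+T)=\mathbf 0$, and substituting the jump $z(\tau^+)$ yields exactly \eqref{mode2}.

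I expect the main obstacle to be making the first step rigorous: showing that the right derivative $\partial x_\lambda(t)/\partial\lambda$ at $\lambda=0$ exists and equals the jump-plus-variational-equation description, and that this derivative may be interchanged with the time integral defining $c_k^i$. Both follow from the usual needle-variation machinery (continuity of solutions of \eqref{f} in the control perturbation together with $C^1$ dependence on initial data, plus $C^1$-ness of $f$ and of the $F_k$), but they are the only non-bookkeeping ingredients; once $z(\cdot)$ is in hand, the chain rule through the quadratic outer function and the adjoint integration by parts are routine. A minor point to record along the way is that $\tau$ must lie strictly below $t_i+T$ so the reverted segment is nonempty, and that $u_i^{\text{\emph{def}}}$ should be piecewise continuous at $\tau$ so that $f(\tau,\cdot,u_i^{\text{\emph{def}}}(\tau))$ is unambiguous.
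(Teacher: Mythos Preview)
Your proposal is correct and follows essentially the same route as the paper's proof: both compute the needle-variation sensitivity $z(t)=\partial x_\lambda(t)/\partial\lambda|_{0^+}$, differentiate the quadratic cost through the $c_k^i$ to obtain $\int_\tau^{t_i+T}\ell(t,x_i^{\text{\emph{def}}})\,z(t)\,dt$, and then invoke the adjoint $\rho_{\mathcal{E}}$ to collapse this integral. The only stylistic difference is that the paper writes $z(t)=\Phi(t,\tau)\,z(\tau^+)$ via the state transition matrix and identifies $\int_\tau^{t_i+T}\ell\,\Phi(t,\tau)\,dt$ as the convolution representation of $\rho_{\mathcal{E}}(\tau)$, whereas you reach the same conclusion by the equivalent integration-by-parts identity $\tfrac{d}{dt}(\rho_{\mathcal{E}}^\top z)=-\ell\,z$; these are two phrasings of the same adjoint duality.
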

\begin{proof}
	The proof is provided in Appendix \ref{proof}.
\end{proof}

Note that the open loop problem $\mathcal{P_\mathcal{E}}$ in \eqref{open_loop_problem} only requires that the cost $J_{\mathcal{E}}$ is decreased by a specific amount  $\mathcal{C}_\mathcal{E}$ (at minimum), instead of maximizing the cost contraction. This allows us to quickly compute a solution, while still achieving stability, following the steps listed in the following.

\subsubsection{Predict}
\label{prediction}
 In this step, the algorithm evaluates the system \eqref{f} from the current state $x_i$ and time $t_i$, with $u_i^{\text{\emph{def}}}(t)$ for \mbox{$t \in [t_i,\;t_i+T]$}. In addition, it uses the predicted state trajectory to backward simulate $\rho_{\mathcal{E}}$ that satisfies \eqref{ergodic_rhodot}.

\subsubsection{Compute optimal action schedule $u^*_s(\cdot)$}
\label{computeoptimal}
In this step, we compute a schedule \mbox{$u^*_s:[t_i, t_i+T] \to\mathbb{R}^m$} which contains candidate infinitesimal actions. Specifically, $u^*_s(\cdot)$ contains candidate action values and their corresponding application times, but assumes $\lambda \to 0^+$ for all. The final $u_A$ and $\tau_A$ will be selected from these candidates in step three of the solution process such that \mbox{$u_{A}=u_s^*(\tau_A)$}, while a finite duration $\lambda_A$ will be selected in the final step. The optimal action schedule $u^*_s(\cdot)$ is calculated by minimizing
\begin{gather}
\label{Ju}
J_{u_s} = \frac{1}{2} \int_{t_i}^{t_i+T} \bigg [\frac{\partial J_{\mathcal{E}}}{\partial \lambda}(t) - \alpha_d \bigg ]^2 + \lVert u_s(t) \rVert_{R}^2 \,dt \text{,} \\
\frac{\partial J_{\mathcal{E}}}{\partial \lambda}(t)=\rho_{\mathcal{E}}(t)^T \left[f\big(t,x_i^{\text{\emph{def}}}(t),u_s(t)\big)-f\big(t,x_i^{\text{\emph{def}}}(t),u_i^{\text{\emph{def}}}(t)\big)\right] \notag 
\end{gather}
where the quantity $\frac{\partial J_{\mathcal{E}}}{\partial \lambda}(\cdot)$ (see Proposition~\ref{new_dJ_dl}), called the mode insertion gradient \cite{egerstedt2006transition}, denotes the rate of change of the cost with respect to a switch of infinitesimal duration $\lambda$ in the dynamics of the system. In this case, $\frac{dJ_{\mathcal{E}}}{d \lambda}(\cdot)$ shows how the cost will change if we introduce a single infinitesimal switch from \mbox{$f\big(t,x_i^{\text{\emph{def}}}(t),u_i^{\text{\emph{def}}}(t)\big)$} to \mbox{$f\big(t,x_i^{\text{\emph{def}}}(t),u_s(t)\big)$} at some point in the time window \mbox{$[t_i,t_i+T]$}. The parameter $\alpha_d \in \mathbb{R}^-$ is user specified and allows the designer to influence how aggressively each action value in the schedule $u_s^*(t)$ improves the cost.

Based on the evaluation of the dynamics \eqref{f}, and \eqref{ergodic_rhodot} completed in the prediction step (Section~\ref{prediction}), minimization of \eqref{Ju} leads to the following closed-form expression for the optimal action schedule:
\begin{equation}
\small
\label{uopt}
u^*_s(t) = (\Lambda + R^T)^{-1} \, \big [\Lambda \, u_i^{\text{\emph{def}}}(t) + h\big(t,x_i^{\text{\emph{def}}}(t)\big)^T \rho_{\mathcal{E}}(t) \, \alpha_d \big ] \text{,}
\end{equation}
\normalsize
where $\Lambda \triangleq h\big(t,x_i^{\text{\emph{def}}}(t)\big)^T \rho_{\mathcal{E}}(t) \rho_{\mathcal{E}}(t)^T h\big(t,x_i^{\text{\emph{def}}}(t)\big)$. The infinitesimal action schedule can then be directly saturated to satisfy any min/max control constraints of the form \mbox{$u_{\text{\emph{min,}}k} < 0 < u_{\text{\emph{max,}}k} \; \forall k \in \{1, \dots , m\}$} such that $u^*_s \in \mathcal{U}$ without additional computational overhead (see \cite{sactro} for proof).

\subsubsection{Determine application time $\tau_A$ (and thus $u_{A}$ value)}
\label{whentoact}

Recall that the curve $u^*_s(\cdot)$ provides the values and application times of possible infinitesimal actions that the algorithm could take at different times to optimally improve system performance from that time. In this step the algorithm chooses one of these actions to apply, i.e., chooses the application time $\tau_A$ and thus an action value $u_{A}$ such that $u_{A}=u^*_s(\tau_A)$. To do that, $u^*_s(\cdot)$ is searched for a time $\tau_A$ that minimizes
\begin{gather}
\label{Jt}
J_{t}(\tau) = \frac{\partial J_{\mathcal{E}}}{\partial \lambda} \bigg \vert_{\tau} \text{,} \\
\frac{\partial J_{\mathcal{E}}}{\partial \lambda} \bigg \vert_{\tau}=\rho_{\mathcal{E}}(\tau)^T \left[f\big(\tau,x_i^{\text{\emph{def}}}(\tau),u^*_s(\tau)\big)-f\big(\tau,x_i^{\text{\emph{def}}}(\tau),u_i^{\text{\emph{def}}}(\tau)\big)\right] \notag \\
\text{subject to } \tau \in [t_i,t_i+T] \text{.} \notag
\end{gather}
Notice that the cost \eqref{Jt} is actually the ergodic mode insertion gradient evaluated at the optimal schedule $u_s^*(\cdot)$. Thus, minimization of \eqref{Jt} is equivalent to selecting the infinitesimal action from $u_s^*(\cdot)$ that will generate the greatest cost reduction relative to only applying default control.

\subsubsection{Determine control duration $\lambda_A$}
\label{howlong}
The final step in synthesizing an ergodic control action is to choose how long to act, i.e., a finite control duration $\lambda_A$, such that condition~\eqref{improve_condition} is satisfied.  From \cite{egerstedt2006transition,caldwell2012projection}, there is a non-zero neighborhood around $\lambda \to 0^+$ where the mode insertion gradient models the change in cost in \eqref{improve_condition} to first order, and thus, a finite duration $\lambda_A$ exists that guarantees descent. In particular, for \emph{finite} durations $\lambda$ in this neighborhood we can write 
\begin{align}
\label{djdlam_first_order}
J_{\mathcal{E}}\big(x(t;x_{i}, u_i^*(\cdot))\big)-J_{\mathcal{E}}\big(x(t;x_{i}&, u_i^{\text{\emph{nom}}}(\cdot))\big) \notag\\
&=\Delta J_{\mathcal{E}} \approx \frac{\partial J_{\mathcal{E}}}{\partial \lambda} \bigg \vert_{\tau_A} \lambda \text{.} 
\end{align}
Then, a finite action duration $\lambda_A$ can be calculated by employing a \emph{line search} process \cite{caldwell2012projection}.

After computing the duration $\lambda_A$, the control action $A$ is fully specified (it has a value, an application time and a duration) and thus the solution $u_i^*(t)$ of problem $\mathcal{P}_{\mathcal{E}}$ has been determined. By iterating on this process (Section~\ref{prediction} until Section~\ref{howlong}), we rapidly synthesize piecewise continuous, input-constrained ergodic control laws for nonlinear systems.

\begin{algorithm}
	\normalsize{
		\caption{ Reactive RHEE for varying $\Phi(x)$}
		\label{reactive_rhee} 
		\hrule		
		\vspace{2 mm}			
		\textbf{Define} ergodic memory $M_{erg}$, distribution sampling time $t_{\phi}$.\\		
		\textbf{Initialize} current time $t_{curr}$, current state $x_{curr}$. 
		
		\hrule			
		\vspace{2 mm}  
		While $t_{curr}<\infty$
		
		\begin{enumerate}
			\item Receive/compute current $\Phi_{curr}(x)$.  
			\item $t_0^{erg} \leftarrow t_{curr} - M_{erg}$
			\item $t_{final} \leftarrow t_{curr}+t_{\phi}$
			\item $\bar{x}_{cl} = RHEE(t_{curr},x_{curr},t_0^{erg},t_{final},\Phi_{curr}(x))$   
			\item $t_{curr} \leftarrow t_{curr} + t_{\phi}$
			\item $x_{curr} = \bar{x}_{cl}(t_{final})$  
		\end{enumerate}
		
		end while	
		
	}
\end{algorithm}

 The Receding-Horizon Ergodic Exploration (RHEE) process for a dynamically varying $\Phi(x)$ is given in Algorithm~\ref{reactive_rhee}. The re-initialization of RHEE when a new $\Phi(x)$ is available serves two purposes: first, it allows for the new coefficients $\phi_k$ to be calculated\footnote{This corresponds to the general case when the distribution time evolution is unknown. If, however, $\Phi(x)$ is a time-varying distribution with known evolution in time, we can pre-calculate the coefficients $\phi_k$ offline to reduce the computational cost further.} and second, it allows the update of the ergodic initial time $t_0^{erg}$.

The ergodic initial time $t_0^{erg}$ is particularly important for the algorithm performance because it regulates how far back in the past the algorithm should look when comparing the spatial statistics of the trajectory (parameterized by $c_k$) to the input spatial distribution (parameterized by $\phi_k$). If the spatial distribution is regularly changing to incorporate new data (for example in the case that the distribution represents expected information density in target localization as we will see in Section~\ref{localization}), it is undesirable for the algorithm to use state trajectory data all the way since the beginning of exploration. At the same time, ``recently'' visited states must be known to the algorithm so that it avoids visiting them multiple times during a short time frame. To specify our notion of ``recently'' depending on the application, we use the parameter $M_{erg}$ in Algorithm~\ref{reactive_rhee} which we call ``ergodic memory'' and simply indicates how many time units in the past the algorithm has access to, so that it is $t_0^{erg} = t_0 - M_{erg}$ every time RHEE is re-initialized at time $t_0$. 

The C++ code for both Algorithms \ref{rhee} and \ref{reactive_rhee} is available online at  \url{github.com/MurpheyLab}.

\paragraph{Remarks on computing $c_k^i$}
\label{compute_ck}
The cost $J_{\mathcal{E}}$ in \eqref{ergodic_cost} depends on the full state trajectory from a defined initial time $t=t_0^{erg}\leq t_i$ in the past (instead of $t_i$ as in common tracking objectives) to $t=t_i+T$, which could arise concerns with regard to execution time and computational cost. Here, we show how to compute $c_k^i$ in a way that avoids integration over an infinitely increasing time duration as $t_i \rightarrow \infty$.  
 To calculate trajectory coefficients $c_k^i$ at time step $t_i$ with $k\in  \mathcal{K}$, and thus cost $J_{\mathcal{E}}$, notice that:
 
 \small 
\begin{align}
\label{calcck}
c_k^i& =  \frac{1}{t_i+T-t_0^{erg}}\int\limits_{t_0^{erg}}^{t_i+T}{F_k(x(t)) dt} =\\
&= \underbrace{ \frac{1}{t_i+T-t_0^{erg}} \int\limits_{t_0^{erg}}^{t_i}{F_k(x(t)) dt} }_{\bar{c}_k^{(i)}}+ \frac{1}{t_i+T-t_0^{erg}} \int\limits_{t_i}^{t_i+T}{F_k(x(t)) dt} \notag\\
&\text{where recursively  } \notag \\
& \bar{c}_k^{(i)} =  \frac{t_{i-1}+T-t_0^{erg}}{t_i+T-t_0^{erg}} \bar{c}_k^{(i-1) } + \frac{1}{t_i+T-t_0^{erg}} \int\limits_{t_{i-1}}^{t_i}{F_k(x(t)) dt}\;\; \notag\\
& \forall \;\; i\geq 1, \;\; k \in  \mathcal{K} 
 \text{  with  } \bar{c}_k^{(0)} = 0. \notag 
\end{align}

\normalsize
Therefore, only the current open-loop trajectory $x(t)$ for all $t \in [t_i, t_i+T]$ and a set of $(K+1)^\nu$ coefficients $\bar{c}_k^{(i)} \in \mathbb{R}$ are needed for calculation of $c_k^i$ and thus  $J_{\mathcal{E}}$ at the $i^{th}$ time step. Coefficients $\bar{c}_k^{(i)} \in \mathbb{R}$ can be updated at the end of the $i^{th}$ time step and stored for use in the next time step at $t_{i+1}$. This provides the advantage that although the cost depends on an integral over time with an increasing upper limit, the amount of stored data needed for cost calculation does not increase but remains constant as time progresses.

\subsection{Stability analysis}
\label{stab}

In this section, we establish the requirements for ergodic stability of the closed-loop system resulting from the receding-horizon strategy in Algorithm~\ref{rhee}. To achieve closed-loop stability for Algorithm~\ref{rhee}, we apply a contractive
constraint \cite{de2000contractive,camponogara2002distributed,xie2008first,ferrari2009model}  on the cost. For this reason, we define $\mathcal{C}_{\mathcal{E}}$ from the open-loop problem \eqref{open_loop_problem} as follows. 

\begin{Definition}
\label{contractive2}
	Let $\mathcal{Q}$ be the set of trajectories $x(\cdot):  \mathbb{R} \rightarrow \mathcal{X}$ in \eqref{f} and $\mathcal{Q}_d \subset \mathcal{Q}$ the subset that satisfies $c_k(x(\cdot)) - \phi_k = 0$ for all $k$. 
Suppose $L(x(\cdot),u,t): \mathcal{Q} \times \mathbb{R}^m \times \mathbb{R} \rightarrow \mathbb{R} $ is defined as follows:
\small
\begin{align}
\label{el}
L(x(\cdot),u,t) : = & \frac{2 Q}{t-t_0^{erg}} \mathlarger{\sum}\limits_{k} \Bigg \{\Lambda_k \bigg[c_k(x(\cdot),t)-\phi_k \bigg] \cdot \notag \\ 
& \cdot\bigg[ F_k(x(t)) -c_k(x(\cdot),t) + f(x(t),u,t)^T \mathlarger{\int}\limits_{t_0^{erg}}^{t} \frac{\partial F_k(x(s))} {\partial x(s) } ds   \bigg  ] \Bigg\} 
\end{align}
\normalsize	 
where $c_k(x(\cdot),t) = \frac{1}{t-t_0^{erg}}\int  _{t_0^{erg}}^{t} F_k(x(s))ds$ denote the Fourier-parameterized spatial statistics of the state trajectory up to time $t$. Through simple computation, we can verify that $L(x_{d}(\cdot),0,\cdot) = 0$ when $x_{d}(\cdot) \in \mathcal{Q}_d$.
Then, 	the ergodic open-loop problem improves the ergodic cost at each time step by an amount specified by the condition \eqref{improve_condition} with $\mathcal{C}_{\mathcal{E}}$ defined as 
\begin{equation}
\label{ergodic_condition}
\mathcal{C}_{\mathcal{E}} = \int_{t_{i-1}+T}^{t_{i}+T} L\big(x_{i}^{\text{\emph{def}}}(\cdot), u_{i}^{\text{\emph{def}}}(t) , t \big) \,dt. 
\end{equation} 
\end{Definition}

This choice of $\mathcal{C}_{\mathcal{E}}$ allows us to rewrite expression \eqref{improve_condition} of the open-loop problem, as a contractive constraint used later in the Proof of Theorem~\ref{stab_theorem}. Contractive constraints have been widely used in the MPC literature to show closed-loop stability as an alternative to methods relying on a terminal
(region) constraint  \cite{grune2011nonlinear,lee2011model,mayne2014model}. Conditions
similar to the contractive constraint used here (see \eqref{equiv_condition})  also appear in terminal region methods \cite{grune2011nonlinear,lee2011model,mayne2014model}, either in continuous or in discrete time, as an intermediate step used to prove closed-loop stability.

Next, we define stability in the ergodic sense\footnote{Note how this definition differs from the definition of asymptotic stability about an equilibrium point as we now refer to stability of a motion instead of stability of a single point.}. 
\begin{Definition}
\label{stability}
Let $\mathcal{X}^{\nu} \subset \mathbb{R}^{\nu}$ be the set of states to be ergodically explored. The closed-loop solution $x_{\nu}(t) : \mathbb{R} \rightarrow \mathcal{X}^{\nu}$  resulting from an ergodic control strategy   applied on \eqref{f} is ergodically stable  if the  difference $C(x)-\Phi(x)$ for all $x$ with $C(x)$ defined in \eqref{xstat} (see Section~\ref{erg_section}) converges to a zero density function $0(x)$. Using Fourier parameterization as shown in equations \eqref{phik} and \eqref{info_states}, this requirement is equivalent to \mbox{$c_k(x_{\nu})-\phi_k(x_{\nu}) \rightarrow 0 $} for all $k$ as $t \rightarrow \infty$. 	
\end{Definition}

The following assumptions will be necessary in proving stability. 

\begin{Assumption}
The dynamics $f$ in \eqref{f} are continuous in $u$, piecewise-continuous in $t$, and continuously differentiable in $x$. Also, $f$ is compact, and thus bounded, on any given compact sets $\mathcal{X}$ and $\mathcal{U}$. Finally, $f(\cdot,0,0)=0$.
\end{Assumption}

\begin{Assumption}
	\label{mayer}
 There is a continuous positive definite---with respect to the set $\mathcal{Q}_d$---and radially unbounded function $\mathcal{M}: \mathcal{Q} \times \mathbb{R} \rightarrow \mathbb{R}_+$ such that $L(x(\cdot),u,t)\geq \mathcal{M}(x(\cdot),t)$ for all $u \in \mathbb{R}^m$. 
\end{Assumption}

Assumption~\ref{mayer} is necessary to show that the integral $\int_{t_0^{erg}}^{t} \mathcal{M} (x(\cdot),s) ds$ is bounded for $t \rightarrow \infty$. This result can be then used in conjunction with a well known lemma in \cite{barbalat1959systemes, michalska1994nonlinear, tzorakoleftherakis2017iterative} to prove convergence in the proof  of the stability theorem that follows.

\begin{Theorem}
	\label{stab_theorem}
	Let assumptions 1-2 hold for all time steps $i$. Then, the closed-loop system resulting from the receding-horizon ergodic control strategy is ergodically stable in the sense that \mbox{$c_k(x_{\nu})-\phi_k(x_{\nu}) \rightarrow 0 $} for all $k$ as $t \rightarrow \infty$. 
\end{Theorem}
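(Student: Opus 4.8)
The plan is to run the classical contractive-constraint stability argument for model-predictive control, adapted to the non-Bolza ergodic cost. Write $V_i:=J_{\mathcal{E}}\big(x(\cdot;x_i,u_i^*(\cdot))\big)$ for the optimal open-loop cost returned at step $i$. I would prove the claim in four steps: (i) convert the per-step improvement condition \eqref{improve_condition} into a contractive constraint on the sequence $\{V_i\}$; (ii) conclude from it that $\{V_i\}$ converges and that the per-step cost decrements are summable, so that $\int_{t_0^{erg}}^{\infty}\mathcal{M}(x(\cdot),t)\,dt<\infty$ along the closed loop; (iii) invoke Assumption~\ref{mayer} together with a Barbalat-type lemma to show that $\mathcal{M}$ vanishes along the closed-loop trajectory; (iv) use positive definiteness and radial unboundedness of $\mathcal{M}$ with respect to $\mathcal{Q}_d$ to upgrade this to $c_k(x_\nu)-\phi_k\to 0$ for all $k$, which is precisely ergodic stability in the sense of Definition~\ref{stability}.

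For step (i), the starting observation is that (absent model mismatch) the closed loop over $[t_{i-1},t_i]$ is generated by $u_{i-1}^*$, so $x_i=x_{i-1}^*(t_i)$; by uniqueness of solutions of \eqref{f} and the form \eqref{default_control} of the default control, $x_i^{\text{\emph{def}}}(\cdot)$ therefore \emph{extends} $x_{i-1}^*(\cdot)$, with only the slice $[t_{i-1}+T,\,t_i+T]$ genuinely new. Viewing the ergodic cost \eqref{ergodic_cost} as a functional of the advancing horizon endpoint and integrating its total rate of change along this extension, one identifies $L$ in \eqref{el} — the integrand built from the moving-endpoint Fourier mass, the $1/(t-t_0^{erg})$ re-normalization, and the dynamics $f$ — as exactly the Lagrangian for which $\int_{t_{i-1}+T}^{t_i+T}L\,dt$ records this change; with $\mathcal{C}_{\mathcal{E}}$ defined as in \eqref{ergodic_condition} this rewrites \eqref{improve_condition} as the contractive constraint \eqref{equiv_condition}. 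Combining that constraint with the fact that the line search of Section~\ref{howlong} returns $u_i^*$ with $J_{\mathcal{E}}\big(x(\cdot;x_i,u_i^*(\cdot))\big)\le J_{\mathcal{E}}\big(x(\cdot;x_i,u_i^{\text{\emph{def}}}(\cdot))\big)$, and using $L\ge\mathcal{M}\ge 0$ from Assumption~\ref{mayer}, yields the working inequality $V_i\le V_{i-1}-\int_{t_{i-1}+T}^{t_i+T}\mathcal{M}\big(x_i^{\text{\emph{def}}}(\cdot),t\big)\,dt$. (I would also note that recursive feasibility of $\mathcal{P}_{\mathcal{E}}$, assumed in the hypotheses, is supported by the first-order argument of Section~\ref{howlong}: whenever $c_k\neq\phi_k$ for some $k$, the mode insertion gradient at $\tau_A$ is strictly negative, so a finite $\lambda_A$ meeting \eqref{improve_condition} exists.)

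Steps (ii)--(iv) are then comparatively routine. Since $V_i$ is a weighted sum of squares it is nonnegative, and by step (i) it is non-increasing, hence $V_i\to V_\infty\ge 0$; telescoping gives $\sum_{i\ge 1}\int_{t_{i-1}+T}^{t_i+T}\mathcal{M}\big(x_i^{\text{\emph{def}}}(\cdot),t\big)\,dt\le V_0-V_\infty<\infty$, so the nonnegative integral of $\mathcal{M}$ along the trajectory is finite. Assumption~1 keeps the ergodically explored states in the compact box $\mathcal{X}^\nu$ and $f$ (hence $\dot x$) bounded, so $t\mapsto\mathcal{M}(x(\cdot),t)$ is uniformly continuous; the Barbalat-type lemma of \cite{barbalat1959systemes,michalska1994nonlinear,tzorakoleftherakis2017iterative} then forces $\mathcal{M}(x(\cdot),t)\to 0$ as $t\to\infty$. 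Finally, because $\mathcal{M}$ is continuous, positive definite with respect to $\mathcal{Q}_d$, and radially unbounded, its sublevel sets around $\mathcal{Q}_d$ are bounded, so $\mathcal{M}(x(\cdot),t)\to 0$ forces the trajectory toward $\mathcal{Q}_d$, i.e. $c_k(x_\nu)-\phi_k\to 0$ for all $k$.

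The main obstacle is step (i): the ergodic cost \eqref{metric} is not in Bolza form and depends on the whole trajectory since $t_0^{erg}$ through the normalized statistics \eqref{info_states}, so carefully accounting for how it changes as the horizon slides forward — and hence verifying that the somewhat unusual definitions of $L$ in \eqref{el} and of $\mathcal{C}_{\mathcal{E}}$ in \eqref{ergodic_condition} are exactly what make \eqref{improve_condition} collapse to the clean per-step decrease \eqref{equiv_condition} of $V_i$ — is the delicate part. A secondary technicality is that the decrements accumulated in the telescoping bound live on the horizon-tail windows $[t_{i-1}+T,t_i+T]$ under default control rather than directly on the closed-loop trajectory, so passing from summability to $\mathcal{M}\to 0$ on the closed loop needs the overlap of consecutive horizons (the statistics $c_k^{i,\text{\emph{def}}}$ and $c_k^{i-1,*}$ differ by $O\!\big(t_s/(t_i+T-t_0^{erg})\big)\to 0$) to be made explicit, which is where the cited lemma is actually used.
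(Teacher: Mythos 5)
Your proposal is correct and follows essentially the same route as the paper's proof: rewrite the Mayer-form ergodic cost in Lagrange form to identify $L$ in \eqref{el}, use $\mathcal{C}_{\mathcal{E}}$ to turn \eqref{improve_condition} into the contractive constraint \eqref{equiv_condition}, conclude boundedness of $\int\mathcal{M}$ along the closed loop, and invoke Barbalat's lemma with Assumption~\ref{mayer} to get $c_k(x_\nu)-\phi_k\to 0$. Your added remarks on recursive feasibility, uniform continuity for Barbalat, and the bookkeeping of which horizon window carries the decrement are elaborations of steps the paper leaves implicit, not a different argument.
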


\begin{proof}
Note that the ergodic metric \eqref{ergodic_cost} can be written as   $J_{\mathcal{E}} = \mathcal{B}(t_i+T,x(\cdot))$ with \mbox{$\mathcal{B}(t,x(\cdot)):=  Q \sum_{k\in  \mathcal{K}} \Lambda_k \bigg[c_k(x(\cdot),t)-\phi_k \bigg]^2$} with $c_k(x(\cdot),t)$ defined in Definition~\ref{contractive2}. Using this definition and converting  $J_{\mathcal{E}}$  from Mayer to Lagrange form yields $J_{\mathcal{E}} = \int_{t_0^{erg}}^{t_i+T} L(x(\cdot),u,t) dt$ with $L(x(\cdot),u,t) = \frac{d}{dt} \mathcal{B}(t,x(\cdot))$ resulting in the expression in \eqref{el}. Going back to Definition~\ref{contractive2} and the ergodic open-loop problem \eqref{open_loop_problem} in Section~\ref{main_ergodic}, we note that condition \eqref{improve_condition} with $\mathcal{C}_{\mathcal{E}}$ in \eqref{ergodic_condition} is a contractive constraint applied in order to generate actions that sufficiently improve the cost between time steps. To see that this is true, one can rewrite \eqref{improve_condition} as
\begin{align}
\label{equiv_condition}
J_{\mathcal{E}}\big(x_i^*(\cdot)\big) - J_{\mathcal{E}}\big(x_{i-1}^*(\cdot)\big) \leq & -\int \limits_{t_{i-1}}^{t_{i}} L\big(x_{i-1}^*(\cdot), u_{i-1}^*(t), t\big) \,dt \notag \\ & \leq - \int_{t_{i-1}}^{t_{i}} \mathcal{M} (x_{i-1}^*(\cdot),s) ds
\end{align}
since from \eqref{default_control}, \mbox{$u_i^{def}(t) \equiv u_{i-1}^*(t)$} in \mbox{$[t_i,t_{i-1}+T]$}.
This contractive constraint directly proves that the integral $\int_{t_0^{erg}}^{t} \mathcal{M} (x(\cdot),s) ds$ is bounded for $t \rightarrow \infty$, which, according to Barbalat's lemma found, e.g., in \cite{barbalat1959systemes, michalska1994nonlinear, tzorakoleftherakis2017iterative},  guarantees asymptotic convergence, i.e., that $x(\cdot) \rightarrow \mathcal{Q}_d$ or equivalently that $c_k(x(\cdot)) - \phi_k \rightarrow 0 $ as $t \rightarrow \infty$.
\end{proof}

\subsection{Multi-agent ergodic exploration}

Assume we have $N$ number of agents $\zeta = 1,...,N$, each with its own computation, collectively exploring a terrain to track an assigned spatial distribution $\Phi(x)$. Each agent $\zeta$ performs RHEE as described in Algorithm~\ref{rhee}. 
At the end of each algorithm iteration $i$ (and thus every $t_s$ seconds),  each agent $\zeta$ communicates the Fourier-parameterized statistics of their exploration trajectories $c^{i}_{k,\zeta}$ up to time $t_{i}$ to all the other agents. By communicating this information, the agents have knowledge of the collective coverage up to time $t_i$ and can use this to avoid exploring areas that have already been explored by other agents. This ensures that the exploration process is coordinated so that the spatial statistics of the combined agent trajectories collectively match the distribution.

To use this information, each agent $\zeta$ updates its trajectory coefficients $c^i_{k,\zeta}$ at time $t_i$ to include the received coefficients $c^{i-1}_{k,j}$  from the previous algorithm iteration $i-1$ so that now the collective agent coefficients $\mathbf{c}_{k,\zeta}^i$ are defined to be:
\begin{align}
\label{multick}
\mathbf{c}_{k,\zeta}^i = c^i_{k,\zeta} +  \frac{1} {N-1}  \cdot \sum\limits_{j=1, j\neq\zeta}^{N} c^{i-1}_{k,j} 
\end{align}
where $c_{k,\zeta}^i$ are the coefficients of the agent $\zeta$ state trajectory at time step $t_i$ calculated as in \eqref{calcck}, and $c^{i-1}_{k,j} $ are the coefficients of the remaining agents state trajectories at the previous time step $t_{i-1}$ also calculated as in \eqref{calcck}. So now, agent $\zeta$ computes the ergodic cost \eqref{ergodic_cost} at $t_i$ based on all the agents' past trajectories and Algorithm $\ref{rhee}$ is guaranteed to compute a control action that will optimally improve it. Note that expression \eqref{multick} expands to:   

\small
\begin{align}
\label{multi}
\mathbf{c}_{k,\zeta}^i = \frac{1}{t_i+ T - t_0^{erg} } \notag& \int\limits_{t_0^{erg}}^{t_i+T}{F_k(x_{\zeta}(t)) dt} + \\ & \frac{1}{(N-1) (t_{i-1}+ T - t_0^{erg}) }   \sum\limits_{j=1, j\neq\zeta}^{N}  \int\limits_{t_0^{erg}}^{t_{i-1}+T}{F_k(x_{j}(t)) dt} 
\end{align}

\normalsize
with $\mathbf{c}_{k,\zeta}^0 = \frac{1}{t_i+ T - t_0^{erg} }  \int_{t_0^{erg}}^{t_0+T}{F_k(x_{\zeta}(t)) dt}$ where $x_{\zeta}(t)$ with $\zeta =1,...,N$ is the agent $\zeta$ state trajectory. Therefore expression \eqref{multick} calculates the combined statistics of the current agent's trajectory $x_{\zeta}(t)$, $\forall t \in [t_0^{erg},t_i+T]$ and of the state trajectories that all the other agents have executed up to current time $t_i$ and temporarily intend to execute from $t_i$ (now) to $t_{i-1}+T$ based on their open-loop trajectories  at the previous time step $t_{i-1}$. 

Note that  if the states of two or more agents are identical, the matching agents motion degenerates to a single agent motion and the multi-ergodic control approach fails to take advantage of all agents' control authority.  

\begin{figure}[t]
	\centering
	\includegraphics[width=2.4in]{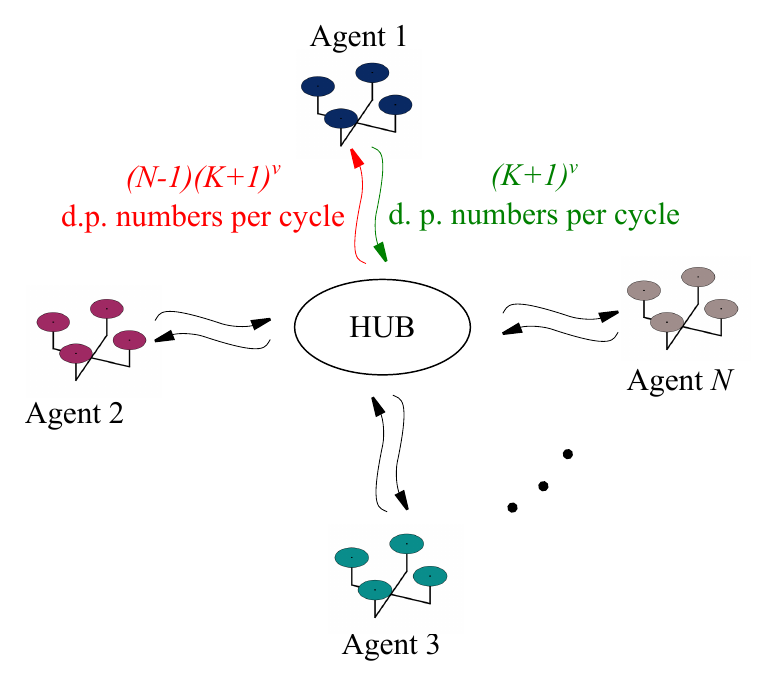} 
	\caption{  Communication network for multi-agent ergodic exploration using a hub configuration. Agents are equipped with independent computational units for local control calculation but exchange information that may influence each other's subsequent actions. The HUB is simply a network component and has no computational capacity.  Assuming that a double precision (d.p.) number has 64 bits and an algorithm cycle completes in $t_s$ seconds, the transmitting bit rate of each individual communication channel should be at least  $\frac{(K+1)^\nu \cdot 64 } {t_s}$ bits/s and receiving bit rate equal or higher than  $(N-1) \frac{(K+1)^\nu \cdot 64 } {t_s}$ bits/s. 
	}
	\label{multi_channel}
\end{figure}

\textit{Computational complexity and communication requirements:} 
This multi-agent ergodic control process exhibits time complexity  $\mathcal{O}(1)$ (i.e., the amount of time required for one algorithm cycle does not scale with $N$) because Algorithm 1 is executed by each agent in \textit{parallel} in a distributed manner.  Computational complexity also remains constant for each agent ($\mathcal{O}(1)$, i.e., the total number of computer operations does not scale with $N$). However, each agent's computational unit needs to communicate with a central transmitter/receiver through a (deterministic) communication channel with receiving capacity that scales linearly in $N$ ($\mathcal{O}(N)$) and with constant transmitting capacity ($\mathcal{O}(1)$).  In particular, at every time step $t_i$, each agent needs to receive $(K+1)^{\nu}$ coefficients corresponding to $c^{i-1}_{k,j} $, by each of the remaining $N-1$ agents. In addition, each agent is responsible to transmit their own  $(K+1)^{\nu}$  coefficients to the rest of the robot network (see Fig.~\ref{multi_channel}). Thus, assuming a constant highest order of coefficients $K$ and number of ergodic variables $\nu$,  transmitting capacity of each agent's communication channel is constant while its receiving capacity scales linearly  with $N$.    
While, in this communication paradigm, we assumed a star network configuration (Fig.~\ref{multi_channel}), note that a fully connected network can also be employed. In any case, the minimum amount of information needed by the team of agents for coordinated exploration is $N$ sets of $(K+1)^{\nu}$ coefficients per algorithm cycle.  Because of this requirement of collective data exchange between agents at each time step, multi-agent ergodic exploration can be characterized as semi-distributed in that each agent executes RHEE (Algorithm~\ref{rhee}) independently but shares information with the other agents after each algorithm cycle.

\section{Receding-horizon ergodic target localization}
\label{localization}
\subsection{Expected Information Density}

Ergodic control for localization of static or moving targets is essentially an application of reactive RHEE in Algorithm~\ref{reactive_rhee} with the following specifics:  1) the agent takes sensor measurements every $t_m$ seconds while exploring the distribution in Step~4, and 2)  belief of targets' state is updated online and used for computation of $\Phi(x)$ in  Step~1.

We focus on the part of calculating $\Phi(x)$ (for now on referred to as Expected Information Density, EID) given the current targets belief and a known measurement model. It is important to point out that the following process for computing the EID depends only on the measurement model; the methodology for belief state representation and update can be arbitrary (e.g., Bayesian methods, Kalman filter, particle filter etc.) and does not alter the ergodic target localization process.   
 The objective is to estimate the unknown parameters $\boldsymbol{\alpha} \in \mathbb{R}^{M} $ describing the $M$ coordinates of a target. We assume that a measurement $\mathbf{z} \in \mathbb{R}^{\mu}$  is made according to a known measurement model 
\begin{equation}
\label{meas_model} 
\mathbf{z} = \Upsilon(\boldsymbol{\alpha}, \mathbf{x})  + \delta,
\end{equation}
 where $\Upsilon(\cdot)$ is a function of
sensor configuration and  parameters, and $\delta$ represents zero mean Gaussian noise with covariance $\Sigma$, i.e., $\delta \sim \mathcal{N}(0,\Sigma)$. 

As in \cite{miller2015ergodic}, we will use the Fisher Information Matrix (FIM) \cite{emery1998optimal, frieden2004science} to calculate the EID. Often used in maximum likelihood estimation, Fisher information $\mathcal{I}(x,\boldsymbol{\alpha})$ is the amount of information a measurement provides at location $x$ for a given estimate of $\boldsymbol{\alpha}$. It   quantifies the ability of a set of random variables, in our case  measurements, to estimate the unknown parameters. For estimation of parameters $\boldsymbol{\alpha} \in \mathbb{R}^{M} $, the Fisher information is represented as a $M\times M$ matrix. Assuming Gaussian noise, the $(i,j)$th FIM element is calculated as  
\begin{equation}
\label{fisher1}
\mathcal{I}_{i,j}(x,\boldsymbol{\alpha})=\frac{\partial{\Upsilon(\boldsymbol{\alpha},x)}}{\partial{\alpha_i}}^T \Sigma^{-1} \frac{\partial{\Upsilon(\boldsymbol{\alpha},x)}}{\partial{\alpha_j}}
\end{equation}
where $\Upsilon(\boldsymbol{\alpha}, \mathbf{x}): \mathbb{R}^M \times \mathbb{R}^n \rightarrow \mathbb{R}^{\mu}$ is the measurement model with Gaussian noise of covariance $\Sigma \in \mathbb{R}^{\mu}$.  Since the estimate of the target position $\boldsymbol{\alpha}$ is represented as a probability distribution function $p(\boldsymbol{\alpha})$, we take the expected value of each element of $\mathcal{I}(x,\boldsymbol{\alpha})$ with respect to the joint distribution $p(\boldsymbol{\alpha})$  to calculate the expected information matrix, $\mathbf{\Phi}_{i,j}(x)$. The $(i,j)$th element of $\mathbf{\Phi}_{i,j}(x)$ is then 
\begin{equation}
\label{exp_info}
\mathbf{\Phi}_{i,j}(x) = \int\limits_{\boldsymbol{\alpha}}^{}  \mathcal{I}_{i,j}(x,\boldsymbol{\alpha})  p(\boldsymbol{\alpha}) \; d\boldsymbol{\alpha}.
\end{equation}
To reduce computational cost, this  integration is performed numerically by discretization of the estimated parameters on a grid and a double nested summation. Note that target belief $p(\boldsymbol{\alpha})$ might incorporate estimates of multiple targets depending on the application. For that reason, this EID derivation process is independent of the number of targets and method of targets belief update. 

In order to build a density map using the information matrix \eqref{exp_info}, we need a metric so that each state $x$ is assigned a single information value.  We will use the following mapping: 
\begin{equation}
\label{fisher2}
\Phi(x) = \det \mathbf{\Phi}(x).
\end{equation}
 The  FIM determinant (D-optimality) is  widely used in the literature, as it is invariant under re-parameterization and linear transformation \cite{ucinski2004optimal}.  A drawback of D-optimality is that it might result in local minima and maxima in the objective function, which makes optimization  difficult when maximizing information. In our case though, local maxima do not pose an issue as our purpose is to approximate the expected information density using ergodic trajectories instead of maximizing it.

\normalsize



\subsection{Remarks on Localization with Limited Sensor Range}

The efficiency of planning sensor trajectories by maximizing information metrics like the Fisher Information Matrix in  \eqref{fisher1} is highly dependent on the true target location \cite{ucinski2004optimal}: if the true target location is known, the optimized trajectories are guaranteed to acquire the most useful measurements for estimation; if not, the estimation and optimization problems must be solved simultaneously and there is no guarantee that  useful measurements will be acquired especially when the sensor exhibits limited range.  

A limited sensor range serves as an occlusion during localization, in that large regions are naturally occluded while taking measurements. Because of this, how we plan the motion of the agent according to the current target estimate is critical; if, at one point, the current target belief largely deviates from the true target position, the sensor might completely miss the actual target (out of range), never acquiring new measurements in order to update the target's estimate. This would be a possible outcome if we controlled the agent to move towards maximum information (IM). In this section, we explain how receding-horizon ergodic target localization (Algorithm~\ref{reactive_rhee}) with limited sensor range can overcome this drawback under a single assumption.

\begin{Assumption}
	\label{nonzero}
	Let $r \in \mathbb{R}^+$ be the radius defining sensor range so that a sensor positioned at $\boldsymbol{x}_s \in \mathcal{X}_{ \nu}$ can only take measurements of targets  whose true target location $\boldsymbol{\alpha}_{true} \in \mathbb{R}^{\nu} $ satisfies $\| \boldsymbol{x}_s - \boldsymbol{\alpha}_{true} \| _{\nu} < r$. An occlusion $\mathcal{O}$ is defined as the region where no sensing occurs i.e., $\mathcal{O} = \{ x_{s} \in \mathcal{X}_{\nu}: \| \boldsymbol{x}_{s} - \boldsymbol{\alpha}_{true} \| _{\nu} > r\}$. At all times $t_{curr}<\infty$ in Algorithm~\ref{reactive_rhee}, there is $x_q \in \mathcal{X}_{ \nu}$  that simultaneously satisfies $\| \boldsymbol{x}_q - \boldsymbol{\alpha}_{true} \|_{\nu} < r$ and $\Phi_{curr}(x_q)>0$,  where  $\Phi_{curr}(x) \forall x \in \mathcal{X}_{\nu}$ is the expected information density computed as in \eqref{fisher2} at time $t_{curr}$.  
\end{Assumption}

\begin{Proposition}
	\label{conv}
	Let Assumption~\ref{nonzero} hold. Also, let $x_{\nu}(\cdot) : [t_{curr}, \infty) \rightarrow \mathcal{X}_{\nu}$ denote the exploration trajectory  of an agent performing ergodic target localization (Algorithms~\ref{rhee}~and~\ref{reactive_rhee})  with expected information density $\Phi_{curr}(x) \forall x \in \mathcal{X}_{\nu}$, equipped with a sensor of range $r \in \mathbb{R}$. Then, there will be time $t_s \in [t_{curr},\infty)$ where the agent's state satisfies $x_{\nu}(t_s) \in  \mathcal{X}_{\nu} \setminus \mathcal{O}$, so that new measurements are acquired and $\Phi_{curr}(x)$ is updated.	
\end{Proposition}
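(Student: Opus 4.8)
The plan is to show that the limited sensor range cannot keep the agent permanently inside the occlusion $\mathcal{O}$, by combining Assumption~\ref{nonzero} with the ergodic stability guarantee of Theorem~\ref{stab_theorem}. The intuition is: Assumption~\ref{nonzero} says the current expected information density $\Phi_{curr}$ is positive somewhere inside the measurement region $B(\boldsymbol{\alpha}_{true},r)\cap\mathcal{X}_\nu = \mathcal{X}_\nu\setminus\mathcal{O}$; ergodic stability forces the closed-loop trajectory to spend a positive fraction of its time there; hence the agent physically enters that region at some finite time, acquires a measurement, and triggers the update of $\Phi_{curr}$ in Step~1 of Algorithm~\ref{reactive_rhee}.

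First I would fix $\Phi_{curr}$, i.e., consider the hypothetical in which no further measurement arrives so that Algorithm~\ref{reactive_rhee} never re-initializes; Theorem~\ref{stab_theorem} then applies to the resulting closed loop and, together with Definition~\ref{stability}, gives $c_k(x_\nu,t)-\phi_k\to 0$ for all $k$ (equivalently, the ergodic metric \eqref{metric} of the executed trajectory tends to $0$) as $t\to\infty$. Next, Assumption~\ref{nonzero} evaluated at time $t_{curr}$ supplies a point $x_q$ with $\|x_q-\boldsymbol{\alpha}_{true}\|_\nu<r$ and $\Phi_{curr}(x_q)>0$. Since $x\mapsto\Phi_{curr}(x)=\det\mathbf{\Phi}(x)$ is continuous whenever the sensor model $\Upsilon$ is continuously differentiable — the entries \eqref{exp_info} of $\mathbf{\Phi}(x)$ are then continuous in $x$ and the determinant is a polynomial in them — I can choose $\epsilon>0$ small enough that the open ball $B(x_q,\epsilon)$ satisfies $\overline{B(x_q,\epsilon)}\subset B(\boldsymbol{\alpha}_{true},r)\cap\mathcal{X}_\nu$ and $\Phi_{curr}>0$ on $B(x_q,\epsilon)$, so that $p:=\int_{B(x_q,\epsilon)}\Phi_{curr}(x)\,dx>0$.

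Then I would convert ergodic stability into an occupation-time statement on $B(x_q,\epsilon)$. Let $C_t(\cdot)=\frac{1}{t-t_0^{erg}}\int_{t_0^{erg}}^{t}\delta[\,\cdot\,-x_\nu(s)]\,ds$ denote the occupation probability measure of the closed-loop trajectory up to time $t$; by construction $\int F_k\,dC_t=c_k(x_\nu,t)$, so the convergence above reads $\int F_k\,dC_t\to\phi_k=\int F_k\,\Phi_{curr}\,dx$ for every $k$. Because the cosine family $\{F_k\}$ is complete in $C(\mathcal{X}_\nu)$ on the compact workspace (a Stone--Weierstrass argument: products of the basis cosines are again finite linear combinations of basis cosines, the family separates points and contains the constants), this yields weak convergence $C_t\rightharpoonup\Phi_{curr}$ as $t\to\infty$. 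By the portmanteau theorem applied to the open set $B(x_q,\epsilon)$, $\liminf_{t\to\infty}C_t\big(B(x_q,\epsilon)\big)\ge\Phi_{curr}\big(B(x_q,\epsilon)\big)=p>0$, so some finite $t^\star$ has $C_{t^\star}\big(B(x_q,\epsilon)\big)>p/2$. As the fixed initial window $[t_0^{erg},t_{curr}]$ contributes at most $(t_{curr}-t_0^{erg})/(t-t_0^{erg})\to 0$ to $C_t\big(B(x_q,\epsilon)\big)$, the Lebesgue measure of $\{s\in[t_{curr},t^\star]:x_\nu(s)\in B(x_q,\epsilon)\}$ is still strictly positive for $t^\star$ large; in particular this set is nonempty, so there is $t_s\in[t_{curr},\infty)$ with $x_\nu(t_s)\in B(x_q,\epsilon)$. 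Then $\|x_\nu(t_s)-\boldsymbol{\alpha}_{true}\|_\nu<r$, hence $x_\nu(t_s)\in\mathcal{X}_\nu\setminus\mathcal{O}$, a measurement is acquired at $t_s$, and $\Phi_{curr}$ is updated, as claimed.

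The step I expect to be the main obstacle is the passage from ``every Fourier coefficient $c_k$ converges to $\phi_k$'' to ``the trajectory actually visits the open ball $B(x_q,\epsilon)$'': one must verify that coefficient-wise convergence against this particular cosine basis (complete only when all $k\in\mathbb{N}^\nu$ are admitted, not merely $k\in\mathcal{K}$) upgrades to weak convergence of the probability measures $C_t$, and then invoke the one-sided portmanteau bound for open sets — noting that if one prefers the two-sided statement $C_t\big(B(x_q,\epsilon)\big)\to p$ it suffices that $\Phi_{curr}\big(\partial B(x_q,\epsilon)\big)=0$, which holds for all but countably many radii $\epsilon$. A secondary, milder point is the continuity of $\Phi_{curr}$ used to turn the single point $x_q$ of Assumption~\ref{nonzero} into an open set of positive $\Phi_{curr}$-measure; if continuity of $\Upsilon$ is not assumed, Assumption~\ref{nonzero} should instead directly posit a subset of $B(\boldsymbol{\alpha}_{true},r)\cap\mathcal{X}_\nu$ of positive $\Phi_{curr}$-measure.
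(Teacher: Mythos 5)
Your proposal is correct and follows the same skeleton as the paper's proof: invoke Assumption~\ref{nonzero} to produce a point $x_q$ of positive information density inside the sensor footprint, invoke Theorem~\ref{stab_theorem} (with $\Phi_{curr}$ held fixed) to force the trajectory statistics to converge to $\Phi_{curr}$, and conclude the trajectory must enter $\mathcal{X}_\nu\setminus\mathcal{O}$. Where you differ is in how the last implication is executed, and your version is substantially more careful. The paper argues pointwise: it asserts that $C(x)-\Phi_{curr}(x)\to 0$ for all $x$ implies that at some finite time $C(x_q)=\Phi_{curr}(x_q)>0$, and reads positivity of the Dirac-delta occupation density at the single point $x_q$ as ``$x_q$ lies on the trajectory.'' That step is formally problematic, since $C$ in \eqref{xstat} is a sum of Dirac masses and cannot be evaluated pointwise, and what Theorem~\ref{stab_theorem} actually delivers is convergence of the Fourier coefficients $c_k\to\phi_k$, not pointwise convergence of densities. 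Your route --- thicken $x_q$ to a ball of positive $\Phi_{curr}$-measure via continuity of $\det\mathbf{\Phi}(x)$, upgrade coefficient-wise convergence to weak convergence of the occupation measures $C_t$, and apply the portmanteau lower bound on the open ball --- is exactly the repair this argument needs, and it buys a genuinely valid occupation-time conclusion rather than a symbolic one. The two caveats you flag are real and apply equally to the paper's version: weak convergence requires the full cosine family, whereas the metric \eqref{metric} and Theorem~\ref{stab_theorem} only control the finitely many $k\in\mathcal{K}$ with $0\le k_i\le K$ (so strictly one only obtains closeness of $C_t$ to $\Phi_{curr}$ after low-pass filtering, which still suffices if $\Phi_{curr}$ has a nontrivial low-frequency component on the ball); and Assumption~\ref{nonzero} as stated gives positivity only at a point, so either continuity of the EID or a positive-measure version of the assumption is genuinely needed. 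Neither issue is addressed in the paper, so your proof is, if anything, stronger than the one it replaces.
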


\begin{proof}	
Due to Assumption~\ref{nonzero}, at time $t_{curr}$  there is $x_q \in  \mathcal{X}_{\nu} \setminus \mathcal{O}$	that satisfies $\Phi_{curr}(x_q)>0$. According to Theorem~\ref{conv} and Definition~\ref{stability}, it is $C(x)-\Phi_{curr}(x) \rightarrow 0$ for all $x\in \mathcal{X}_{\nu}$ as $t \rightarrow \infty$. Therefore, at some time $t \in [t_{curr}, \infty )$, we know that $C(x_q)=\Phi_{curr}(x_q)>0$ that is equivalent to \mbox{$\frac{1}{t-t_{curr}} \int_{t_{curr}}^{t} \delta[x_q-x_\nu(\tau)] d\tau >0$} from Eq.~\eqref{xstat}. This leads to the conclusion that $x_q \in x_{\nu}(\cdot)$ which directly proves the proposition.

\end{proof} 

Assumption 4---stating that information density is always non-zero in an arbitrarily small region around the true target---can be satisfied in various ways. For example, we can adjust  the parameters of the estimation filter to achieve a sufficiently low convergence rate. Alternatively, in cases of high noise and variability, we can artificially introduce nonzero information values across the terrain so as to promote exploration as in the simulation and experimental examples. 
\begin{figure}[b!]
	\centering
	\includegraphics[width=3.5in]{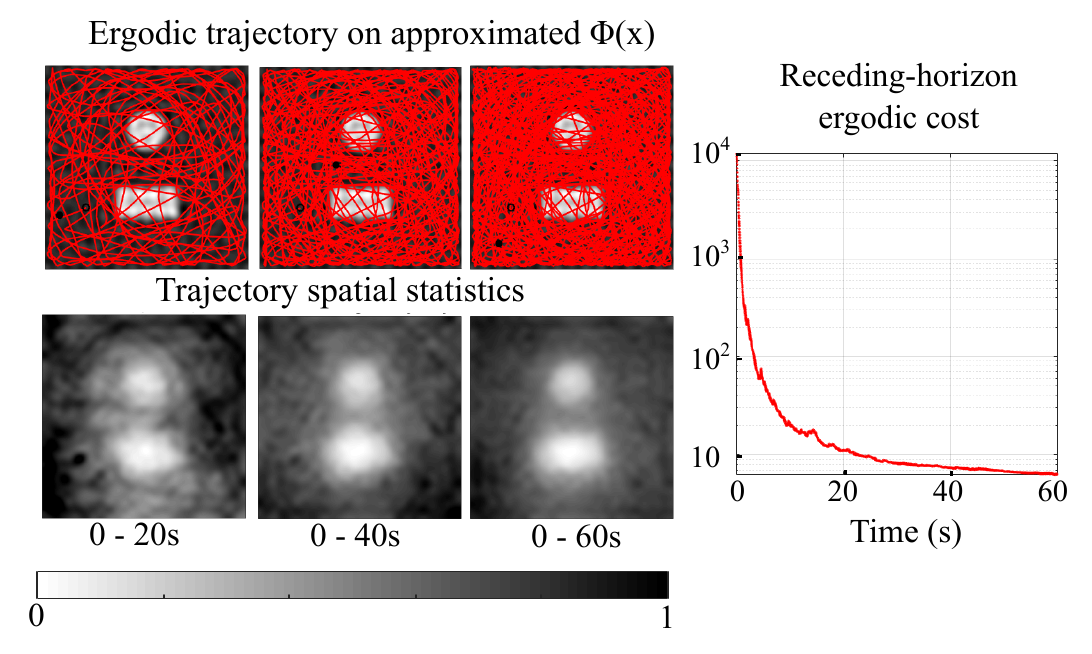} 
	\caption{   Ergodic area coverage in an occluded environment (Algorithm~\ref{rhee} with time horizon $T=0.1s$ and sampling time $t_s=0.02s$). White regions  in $\Phi(x)$ (top row) indicate low to no  probability of detection (occlusions), for example due to sensor failure or physical entities obscuring visibility.  Note that occlusions are not obstacles that should be completely avoided. 
		Bottom row shows the spatial statistics $\Phi_x^i(x)$ of the followed trajectory from $t=0$ to $t=t_i$ calculated as  $\Phi_x^i(x) = \sum_{k\in  \mathcal{K}} \big \{\Lambda_k c_k^i F_k(x) \big \}$ with $\nu=2$ and $K=20$. By the end of the simulation at $t=60$, the trajectory spatial statistics $\Phi_x^{60}(x)$ closely match the initial terrain spatial distribution  $\Phi(x)$, accomplishing the objective of ergodicity as expected. The ergodic cost \eqref{ergodic_cost} is shown to decrease on logarithmic scale over time. Small cost fluctuations result from numerical errors. 
	}
	\label{uniform}
\end{figure}

\section{Simulation Results}

\begin{figure*}[t]
	\centering
	\includegraphics[width=7.0in]{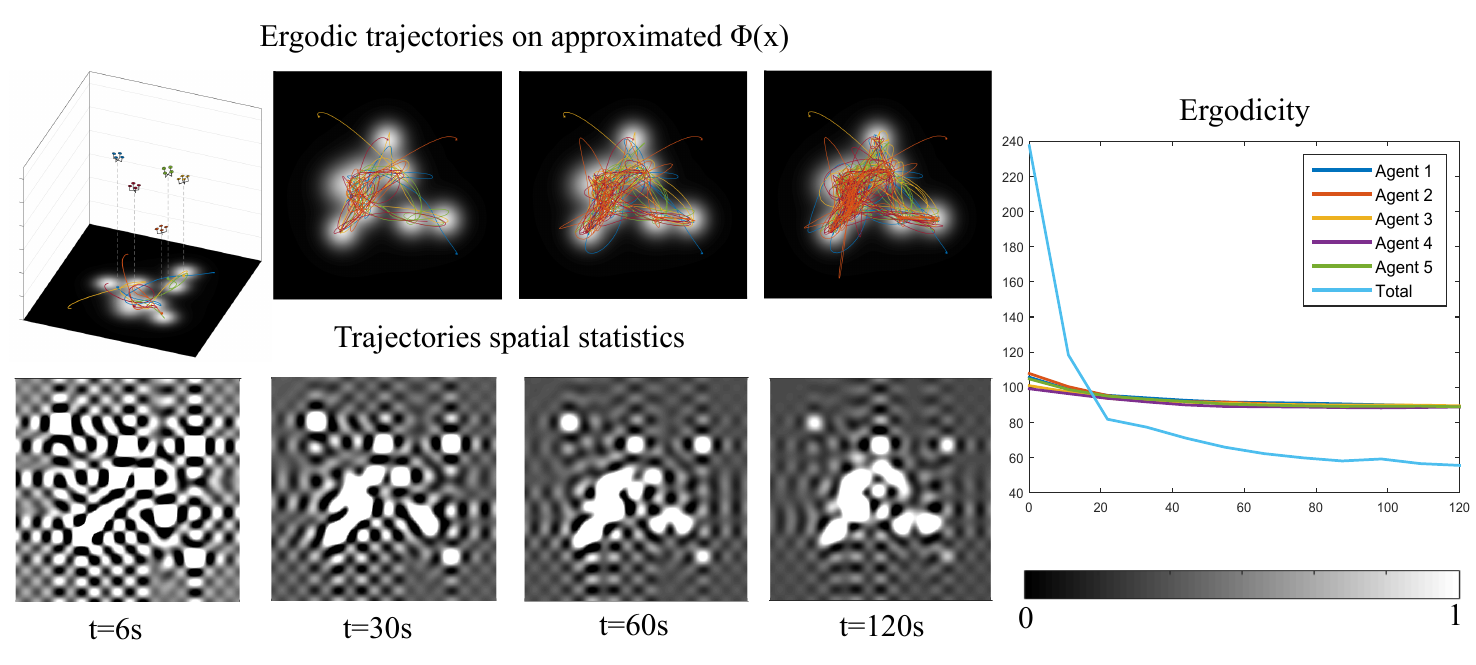} 
	\caption{ Multi-agent UAV exploration (each agent executes Algorithm~\ref{rhee} with trajectory coefficients calculated as in \eqref{multick} with $N=2$, time horizon $T=1.3s$ and sampling time $t_s=0.1s$). Five quadrotor models  collectively explore a terrain to track a spatial distribution $\Phi(x)$ (top row). Highest order of coefficients is $K=12$. Note that agents naturally avoid exploring the same region simultaneously and only return back to already visited areas when sufficient time has passed. Bottom row shows the spatial statistics of the combined agent trajectories calculated as described in Fig.~\ref{uniform} caption. As expected, by the end of simulation, the collective spatial statistics match closely the initial spatial distribution. Plot on the right shows the ergodicity measure of trajectories  as they evolve in time. Ergodicity of each agent's trajectory at time $t$ is calculated as $\sum_{k} \{\Lambda_k [\frac{1}{t-t_0^{erg}}\int_{t_0^{erg}}^{t}{F_k(x_{\zeta}(s)) ds}-\phi_k ]^2 \}$   for the ergodic trajectories $x_{\zeta}(t)$ with $\zeta =1,...,N$. Total ergodicity of the collective trajectories is calculated as  $\sum_{k} \{\Lambda_k [\frac{1}{t-t_0^{erg}}\int_{t_0^{erg}}^{t}{\sum_{j=1}^{N} F_k(x_{j}(s)) ds}-\phi_k ]^2 \}$. A video representation of this exploration process is available in the supporting multimedia files.
	}
	\label{2_quad_tri}
\end{figure*} 

\subsection{Ergodic Exploration and Coverage}
\label{ex1}

\subsubsection{Motivating example - Uniform area coverage with occlusions}

In this first example, we control an agent to explore an occluded environment in order to achieve a uniform probability of detection across a square terrain, using Algorithm~\ref{rhee}. The shaded regions (occlusions) $\mathcal{O}$ comprise a circle and a rectangle in Fig.~\ref{uniform} and  they exhibit zero probability of detection i.e., $\Phi(x)=0 \forall x\in\mathcal{O}$. Such situations can arise in vision-based UAV exploration with occlusions corresponding to shaded areas that limit  visibility, or in surveillance by mobile sensors where  the shaded regions can be thought of as areas where no sensor measurements can be made due to foliage. It is assumed that the agent has second-order dynamics with $n=4$ states $x=[x_1,x_2,x_3,x_4]$, $m=2$ inputs $u=[u_1,u_2]$, and $f(x,u)=[x_2, u_1, x_4, u_2]$ in \eqref{f}. Forcing saturation levels are set as $u_{min} = -50$ and $u_{max} = 50$. 

Snapshots of the agent exploration trajectory is shown in Fig.~\ref{uniform}. As time progresses from $t=0$ to $t=60$ the spacing between the trajectory lines is decreasing, meaning that the agent successfully and completely covers the square terrain  by the end of the simulation. This is also reflected in the spatial statistics of the performed trajectory that eventually closely match the desired probability of detection. A similar example was used by Mathew and Mezi{\'c} in \cite{mathew2011metrics} for evaluation of their ergodic control method that was specific to double integrator systems. Our results serve as proof of concept, showing that Algorithm~\ref{rhee}---although designed to control complex nonlinear systems---can still handle simple systems efficiently and achieve full area coverage, as expected.

\begin{figure*}[t!]
	\centering
	\includegraphics[width=7.1in]{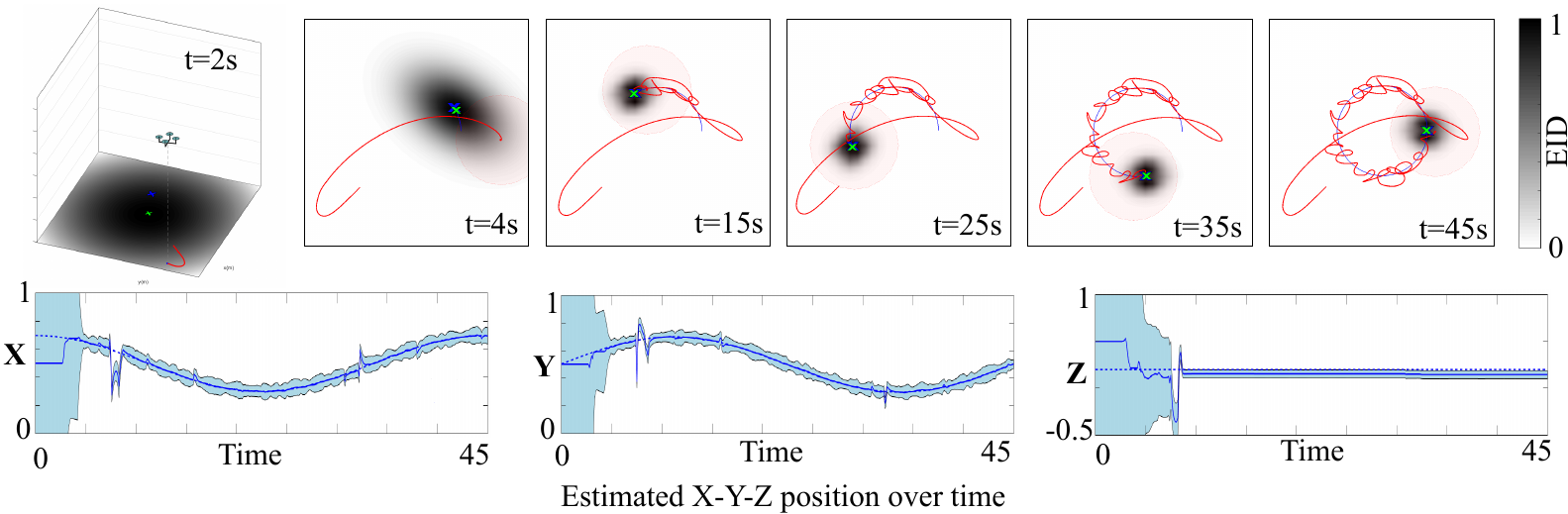} 
	\caption{ Bearing-only localization of a moving target. Top: Top-view snapshots of the UAV trajectory (red curve) where the true target position (blue X-mark) and path (blue curve), and the estimated target position (green X-mark) are also illustrated. The quadrotor can acquire vision-based measurements with a limited range of view that is illustrated as a light red circle around the current UAV position. No prior behavior model of the target motion is available for estimation using EKF. The limited sensor range serves as an occlusion as it naturally occludes large regions while taking measurements. The highest order of coefficients is $K=10$. The quadrotor explores the areas with highest information to acquire useful measurements. Although the geometry of the paths is not predefined, the resulting trajectories follow a cyclic, swirling pattern around the true target position, as one would naturally expect --- like in standoff tracking solutions for example \cite{summers2009coordinated}. Bottom: The target estimate (solid blue curve) is compared to the real target position (dashed blue curve) along with an illustration of the belief covariance (light blue area around estimated position) over time. The target belief converges to a normal spatial distribution with the mean at the true target position and low covariance. A video representation of this exploration process is available in the supporting multimedia files.   
	}
	\label{mov_target}
\end{figure*}

\subsubsection{Multi-agent aerial exploration}
The previous example showed how RHEE can perform area coverage using the simple double-integrator dynamic model. Here and for the rest of this section, we will utilize a 12-dimensional quadrotor model to demonstrate the algorithm's efficiency in planning trajectories for agents governed by higher-dimensional nonlinear  dynamics. The search domain is two-dimensional with $\nu =2$. The quadrotor model \cite{carrillo2013modeling, mellinger2011minimum, luukkonen2011modelling}  has 12 states ($n=12$ in system \eqref{f}), consisting of the position $[x_q,y_q,z_q]$ and velocity $[\dot{x}_q, \dot{y}_q,\dot{z}_q]$ of its center of mass in the inertial frame, and the roll, pitch,  yaw angles $[\phi_q,\theta_q,\psi_q]$  and corresponding angular velocities $[\dot{\phi}_q,\dot{\theta}_q, \dot{\psi}_q]$  in the body frame.  Each of the 4 motors produces the force $u_i$, $i=1,...,4$ ($m=4$ in \eqref{f}), which is proportional to the square of the angular speed, that is, $u_i = k \omega^2$. Saturation levels are set as $u_{min}=0$ and $u_{max}=12$ in \eqref{f}. Nominal control $u_i^{\text{\emph{nom}}}$ from equation~\eqref{default_control} in Algorithm~\ref{rhee} is a PD (proportional-derivative) controller that regulates the agent's height to maintain a constant value.

We use five aerial vehicles to collectively explore a terrain based on a constant  distribution of information. 
The resulting agent trajectories and corresponding spatial statistics are shown in Fig.~\ref{2_quad_tri}. 
It is important to notice here that each agent is not separately assigned to explore a single  distribution peak (as a heuristic approach would entail) but rather all agents are provided with the same spatial distribution as a whole and their motion is planned simultaneously in real time to achieve best exploration on the areas with highest probability of detection. 

This simulation example  was coded in C++ and executed at a Linux-based laptop with an Intel Core i7 chipset.  Assuming that each quadrotor executes Algorithm~\ref{rhee} in parallel, the execution time of the $120s$ simulation is approximately $\sim70s$ per quadrotor, running  about two times faster than real time. 

\subsection{ Ergodic Coverage and Target Localization}

 \begin{figure*}[t!]
	\centering
	\includegraphics[width=7.1in]{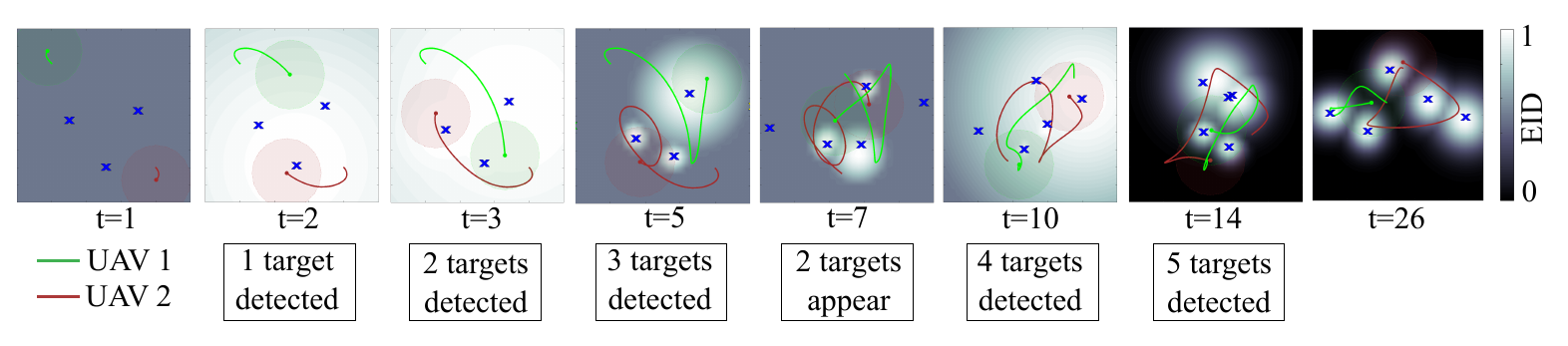} 
	\caption{ Multi-agent simultaneous exploration and targets localization. The problem of exploration vs exploitation is addressed by controlling two agents to localize detected targets while exploring for new undetected targets. The algorithm scales to multiple target localization without any modification, as it tracks a universal non-parametric information distribution instead of each target independently. For cleaner representation, only the UAV trajectories of the past $5s$ are shown in each snapshot. Highest order of coefficients is $K=10$. Mean and standard deviation of targets belief (not shown here) fluctuate in a pattern similar to the experimental results  in Fig.~\ref{exp2b}.  Light green and red circles around the current UAV positions indicate the camera range of view. 	Notice that the EID value is set at a middle level (gray color) in areas where no high information measurements can be taken from the already detected targets. This serves to promote exploration for more targets.  
	}
	\label{simultaneous}
\end{figure*} 

\label{ex2}
In the following examples, we will use the 12-dimensional nonlinear quadrotor model from the previous section to perform motion planning for vision-based static and moving target localization with bearing-only sensing through a gimbaled camera that always faces in the direction of gravity. 
Representing the estimate of target's position as a Gaussian probability distribution function, we use an Extended Kalman Filter (EKF)  \cite{kalman1960new, julier1997new} to update the targets' position belief based on the sensor measurements. We use EKF because it is fast, easy to implement and regularly used in real-world applications but any other estimator (e.g., the Bayesian approaches in \cite{jones2011visual}) can be used instead with no change to the process of Algorithm~\ref{reactive_rhee}. Note for example that reliable bearings-only estimation with EKF cannot be guaranteed as previous results indicate \cite{aidala1979kalman}, so it might be desirable to use a more specialized estimator.

It is assumed that the camera uses  image processing techniques (e.g., \cite{lee2011learning}) to take bearing-only measurements, measuring the azimuth and elevation angles from current UAV position $[x_q,y_q,z_q]$ to the detected target's  position $[x_{\tau}, y_{\tau}, z_{\tau}]$. The corresponding measurement model is $\mathbf{z} $ in \eqref{meas_model} with  
\begin{equation}
\label{bearing}
\Upsilon([x_{\tau}, y_{\tau}, z_{\tau}],[x_q,y_q,z_q]) =\begin{bmatrix}
\tan^{-1}\Big (\frac{x_q-x_{\tau}}{y_q-y_{\tau}} \Big ) & \\[0.3em]
\tan^{-1}\Big (\frac{z_q-z_{\tau}}{\sqrt{ (x_q-x_{\tau})^2 + (y_q-y_{\tau})^2}} \Big ) &  \\[0.3em]
\end{bmatrix}.
\end{equation}
The measurement noise covariance is $\Sigma= diag\{[0.1,0.1]\}$  in radians.  As in the previous examples, a PD controller serves as nominal control, regulating UAV height $z_q$.  The target transition model for EKF is expressed as $\boldsymbol{\alpha}_i = \mathcal{F}(\boldsymbol{\alpha}_{i-1})+ \epsilon$  with $\epsilon$ representing zero  mean Gaussian noise with covariance $C$, i.e., $\epsilon \sim \mathcal{N}(0,C)$. We assume that no prior behavior model of the target motion is available and thus the transition model is $\mathcal{F}(\boldsymbol{\alpha}_{i-1}) \equiv \boldsymbol{\alpha}_{i-1}$. 
Importantly, the camera sensor has limited range of view, completely disregarding targets that are outside of a circle centered at the UAV position with constant radius (as depicted in Fig.~\ref{mov_target}).

\subsubsection{Bearing-only localization of a moving target with limited sensor range}
Here, we demonstrate an example where a quadrotor is ergodically controlled to localize a moving target, with frequency of measurements $f_m=20$Hz and frequency of EID update at $f_{\phi}=10$Hz. The 3D target position  $[x_{\tau}, y_{\tau}, z_{\tau}]$ is localized so that $M=3$. We assume that no prior behavior model of the target motion is available and thus the transition model is $F(\boldsymbol{\alpha}_{k-1}) \equiv \boldsymbol{\alpha}_{k-1}$ with covariance $C =diag\{[0.001,0.001,0.001]\}$ (i.e modeled as a diffusion process). 
Top-view snapshots of the UAV motion are shown in Fig.~\ref{mov_target}. The agent detects the target without prior knowledge of its position, whereafter it closely tracks the target by applying Algorithm~\ref{reactive_rhee} to adaptively explore a varying expected information density $\Phi(x)$. Although the geometry of the paths is not predefined, the resulting trajectories follow a cyclic, swirling pattern around the true target position, as one might expect.

This simulation example  was coded in C++ and executed at a Linux-based laptop with an Intel Core i7 chipset.  The execution time of the $45s$ simulation is approximately $\sim 25s$. This result is representative of the algorithm's computational cost and execution time, because it involves a high-dimensional, nonlinear system.  Localization  is slower than pure exploration, mainly because it requires calculation of the expected information density every $t_{\phi}$ seconds using the expressions \eqref{fisher1}, \eqref{exp_info} and \eqref{fisher2}.

\subsubsection{Multi-agent simultaneous terrain exploration and target localization}
\label{last_sim}

This simulation example is designed to demonstrate search for undetected targets (exploration) and localization of detected moving targets simultaneously, using two agents.  A random number of targets must be detected (exploration) and tracked (target localization) by two UAVs.  Note that here we do not address the issue of cooperative sensing filters \cite{zhang2010cooperative} for multiple sensor platforms: instead, we use a centralized Extended Kalman Filter for simplicity but any filter that provides an estimate of the target's state can be employed instead.
 
  Top-view snapshots of the multi-agent exploration trajectories are given in Fig.~\ref{simultaneous}. At $t=0$ when 3 targets are present in the terrain but none of them have been detected by the agents, the EID is  a uniform distribution across the workspace. Information density is set at  $\Phi(x)=0.5$ for all $x$ (gray coloring). By $t=5$ all present (three) targets have been detected and the EID map is computed based on Fisher Information using expressions \eqref{fisher1}, \eqref{exp_info} and \eqref{fisher2}. Information density is still set at a middle level (instead of zero) in areas where information of target measurements is zero. This serves to promote exploration in addition to localization. In this special case, the terrain spatial distribution $\Phi(x)$ is defined to encode both probability of detection (for the undetected targets) and expected information density (for the detected targets). At $t=7$  two more targets appear and by $t=14$  five targets have been detected. Here, we assume that no more than five targets are to be detected and thus, after the fifth target detection, the spatial distribution only encodes expected information density (note that $\Phi(x)=0$ for all $x$ where information from measurements is zero). 
  
  This simulation example  was coded in C++ and executed at a Linux-based laptop with an Intel Core i7 chipset.  Assuming that each quadrotor executes Algorithm~\ref{rhee} in parallel, the execution time is approximately $\sim 30s$ per quadrotor. 
 
\begin{figure}[t!]
	\centering
	\includegraphics[width=3.3in]{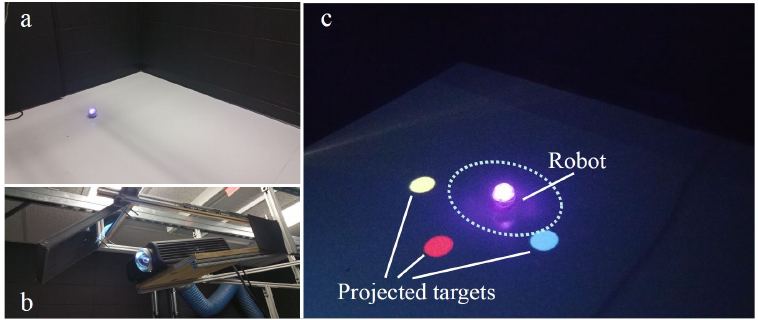} 
	\caption{ 
		(a) The \texttt{sphero SPRK} robot is shown in the experimental setup. The internal mechanism shifts the center of mass by rolling and rotating within the spherical enclosure. RGB LEDs on the top of the \texttt{sphero SPRK} are utilized to track the odometry of the robot through a webcam using OpenCV for motion capture. The Robot Operating System (ROS, available online \cite{ros})  is used to transmit and collect data at 20 Hz. A projection (b) is used to project the targets onto the experimental floor shown in (c).
	}
	\label{exp_setup}
\end{figure}

\begin{figure}[t!]
	\centering
	\includegraphics[width=3.4in]{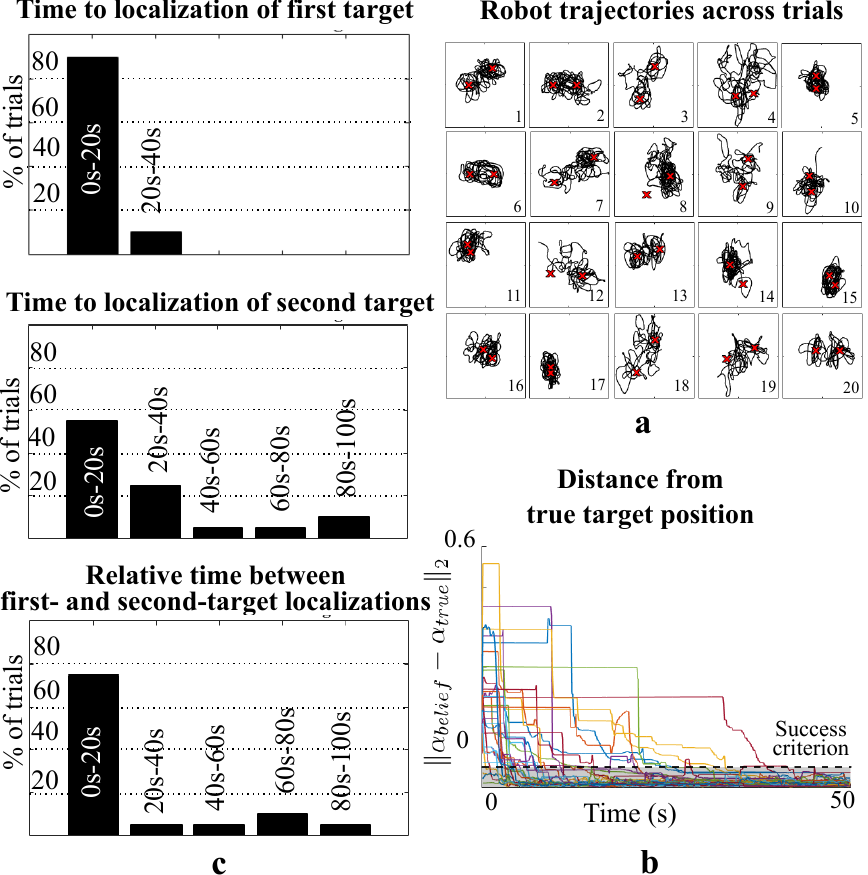} 
	\caption{ Twenty trials of localizing 2 random targets using the \texttt{sphero SPRK} robot at a $1$m$\times$$1$m terrain with simulated limited sensor range of $0.2$m. a) Top-view snapshots of the robot trajectories (black) across trials. The algorithm robustly localizes random pairs of static targets (shown in red) by performing cyclic trajectories around the targets (as required for useful bearing-only measurements), without path specification---the behavior results naturally from the objective of improving ergodicity with respect to the expected information density. b) The distance of the mean target estimate from true target position over time across all trials that were complete by the first 50 seconds. Distance remains constant for as long as the target is outside of the sensor range or it has not be detected yet. c) Bar graphs showing time to localization of first target (top), of second target (center) and relative time that the second target was localized after the first target (bottom), across trials.  The localization of a target is defined to be successful when the $\ell^2$-norm of the difference between the target's position belief and the real target position falls below 0.05, i.e., 	$\| \alpha_{belief} - \alpha_{true} \|_2 < 0.05$. 
	In $100\%$ of the trials, the first target is localized within 40 seconds. In $80\%$ of the trials, both targets are localized by the first $40$ seconds. In $75\%$ of the trials, the waiting time between localizing the first and second target is less than $20$ seconds. 	 Even when target detection is delayed or the EKF fails to converge in a few iterations, the robot is successful in localizing all the targets by $100$ seconds.   
	} 
	\label{exp1}
\end{figure}

\section{Experimental Results} 
\label{experiment}

\begin{figure}[t!]
	\centering
	\includegraphics[width=3.4in]{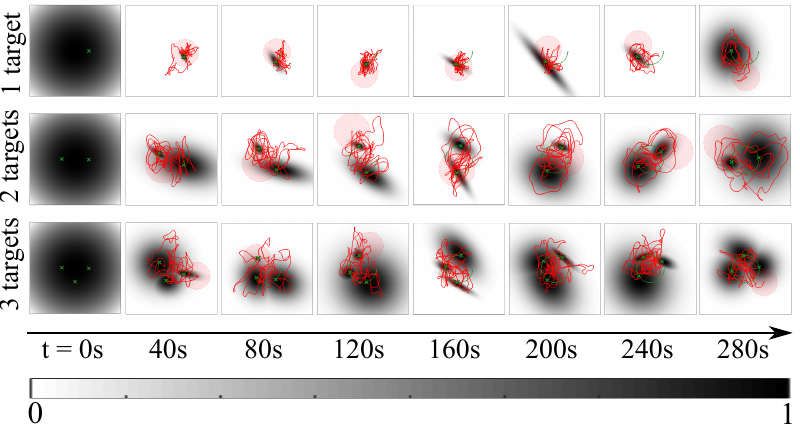} 
	\caption{ 
		 The \texttt{sphero SPRK} robot robustly localizes increasing number of moving targets with bearing-only measurements. The targets' belief---represented as a black-and-white spatial distribution---remains close to the actual targets' location for all 280 seconds. The robot trajectories of a 40-second time window are also shown in red.
	}
	\label{exp2}
\end{figure} 

\begin{figure*}[t!]
	\centering
	\includegraphics[width=7.0in]{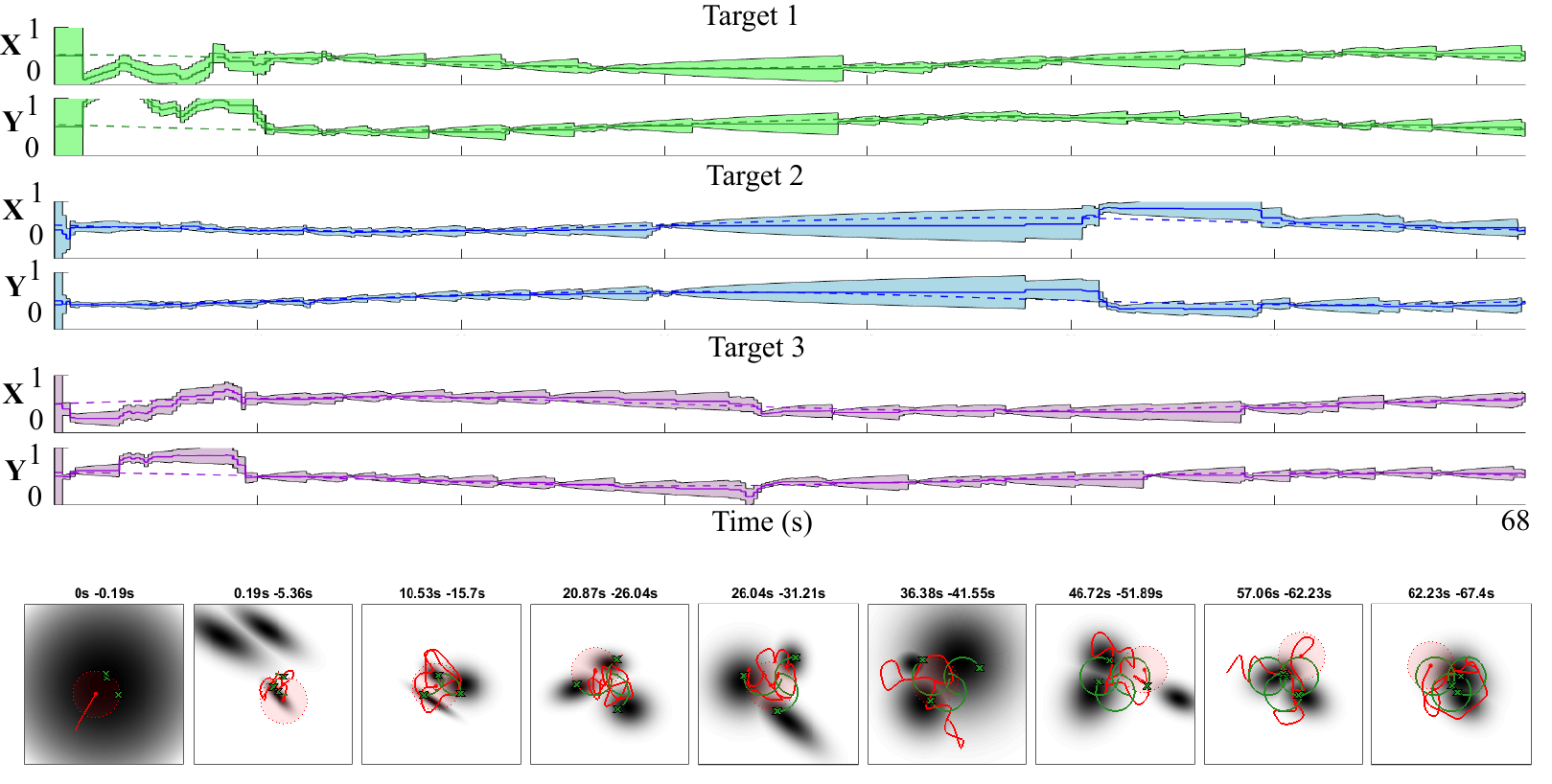} 
	\caption{ Localization of 3 moving targets using the \texttt{sphero SPRK} robot.  The target estimates (solid curves) are compared to the real target locations (dashed curves) along with an illustration of the belief covariance (shaded area around estimated position) over time. Because the targets are constantly moving and the sensor range is limited, the standard deviation of the targets belief  fluctuates as time progresses. The agent localizes each target alternately; once variance on one target estimate is sufficiently low, the agent moves to the next target. Importantly, this behavior  is completely autonomous, resulting naturally from the objective of improving ergodicity.	Note that we can only decompose the targets belief into separate target estimates because of our choice to use EKF where each target's belief is modeled as a normal distribution. This would not be necessarily true, had we used a different estimation filter (e.g., particle filter). Bottom row shows top-view snapshots of the robot and target's motion. A video of this experiment is available in the supporting multimedia files.
	}
	\label{exp2b}
\end{figure*}

We perform two bearing-only target localization experiments using a \texttt{sphero SPRK} robot \cite{sphero} in order to verify real-time execution of Algorithm~\ref{rhee} and showcase the robustness of the algorithm in bearing-only localization.  In addition to the robot, the experimental setup involves an overhead projector and a camera, and is further explained in Fig.~\ref{exp_setup}. The overhead camera is used to acquire sensor measurements of current robot and target positions that are subsequently transformed to bearing-only measurements as in \eqref{bearing}. We additionally simulate a limited sensor range as a circle of $0.2$ m radius  around the current robot position. As in the quadrotor simulation examples, we use an Extended Kalman Filter for bearing-only estimation. In all the following experiments, the ergodic controller runs at approaximately $10$Hz frequency, i.e., $t_s=0.1s$ in Algorithm~\ref{rhee}.

\subsection{Experiment 1}
\label{exp_monte}
In this Monte Carlo experiment, we perform twenty trials of localizing two static targets randomly positioned in the terrain. For each trial, we consider the localization of a target successful when the $\ell^2$-norm of the difference between the target's position belief and the real target position falls below 0.05, i.e.,	$\| \alpha_{belief} - \alpha_{true} \|_2 < 0.05$.  To promote variability, initial mean estimates of target positions are also randomly selected for each trial. Initial distribution $\Phi(x)$ is uniform inside the terrain boundaries. 

The robot simultaneously explores the terrain for undetected targets and localizes detected targets. As in the simulation example of Section~\ref{last_sim}, we achieve simultaneous exploration and localization  by setting the probability of detection (i.e., distribution $\Phi(x)$) across the terrain at a nonzero value. For most trials, targets are successfully localized in less than $60$ seconds.  We see that even in the few cases when target detection is delayed due to limited sensor range or when the EKF fails to converge (as expected for bearing-only models \cite{aidala1979kalman})   (see trials with time to second-target localization higher than $60$s  in Fig.~\ref{exp1}c),  the robot manages to eventually localize both targets by fully exploring the EID instead of moving towards the EID maximum as in information maximization techniques. This result validates Proposition~\ref{conv} that provides convergence guarantees even with poor target estimates and limited sensor range.

\subsection{Experiment 2}
\label{exp_moving}
With this experiment, we aim to demonstrate the robustness of the algorithm in localizing increasing number of moving targets. The resulting robot trajectories for localizing 1, 2 and 3 targets are shown in Fig.~\ref{exp2}, while Fig.~\ref{exp2b} shows the results for localizing 3 targets moving at a different pattern. As in the simulation examples, the motion of each target is modeled as a diffusion process. Note that because the targets are constantly moving and the sensor range is limited, the standard deviation of the targets belief  fluctuates as time progresses (see Fig.~\ref{exp2b}). The agent localizes each target alternately; once variance on one target estimate is sufficiently low, the agent moves to the next target. Importantly, this behavior  is completely autonomous, resulting naturally from the objective of improving ergodicity.

\section{Discussion}

\begin{figure}[b]
	\centering
	\includegraphics[width=3.2in]{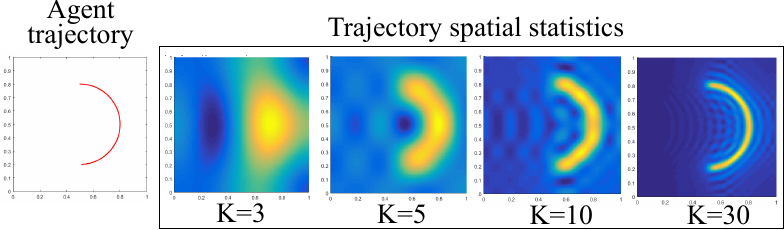} 
	\caption{  The Fourier-parameterized spatial statistics of the semi-circular trajectory shown on the left, calculated for different values of the highest order of coefficients $K$. Yellow indicates high statistical coverage and blue is low to no coverage. 
	}
	\label{coeff_compar}
\end{figure} 

\subsection{Number of Fourier coefficients}
In this subsection, we briefly discuss the effect of $K$, the highest order of coefficients $\phi_k$ in \eqref{phik} and $c_k$ in \eqref{info_states}, on the algorithm's performance. First, note that, because both the trajectory statistics and the desired search density are parameterized by the same number of Fourier coefficients $K$, all choices of $K$ lead to stability as defined in Definition~\ref{stability}. However, since convergence only concerns the trajectory statistics, i.e., \mbox{$c_k(x_{\nu})-\phi_k(x_{\nu}) \rightarrow 0 $}, and not the trajectory itself, different choices of $K$ can affect the computed trajectories. 

Figure~\ref{coeff_compar} shows the spatial statistics of a   semi-circular trajectory with constant velocity, represented with an increasing number of Fourier coefficients. The representation of the agent's trajectory becomes more diffuse, as $K$ decreases. If the search density has fine-scale details, small $K$ might mean that the agent will disregard the details despite meeting the ergodicity requirement. In addition, as expected, there is a diminishing returns property: the rate of change in the algorithm output decreases as $K$ is increased (because the coefficients in the ergodic metric become small as K becomes large). We can see this in Fig.~\ref{coeff_compar} where the trajectory spatial statistics for $K=30$ have nearly converged to the original agent trajectory shape. 

 To sum up, our choice of $K$ depends  on how refined the given search density is.  In the simulation and experimental results presented here, we found that a minimum $K=5$ is sufficient in completing the assigned tasks.

\subsection{Convergence Rate}

 Note from the stability analysis in Section~\ref{stab} that  Algorithm~\ref{rhee}  is formulated so that  an upper bound on the convergence rate is satisfied   in equation~\eqref{equiv_condition}, determined by $\mathcal{M}(x(\cdot),t)$. Although $\mathcal{M}$ is state-dependent, we may assume  an additional constraint $\mathcal{D}(t)$ in Assumption~\ref{mayer} so that $L(x(\cdot),u,t)\geq \mathcal{M}(x(\cdot),t) \geq  \mathcal{D}(t)$.  This imposes a time-dependent convergence rate of the form $J_{\mathcal{E}}\big(x_i^*(\cdot)\big) - J_{\mathcal{E}}\big(x_{i-1}^*(\cdot)\big) \leq-\int_{t_{i-1}}^{t_{i}} \mathcal{D}(t) \,dt$ in \eqref{equiv_condition}. An upper-bound on convergence rate is then the highest rate, determined by $\mathcal{D}(t)$,  for which there always exists a single control action of finite duration that satisfies the constraint requirement, so that open-loop problem \eqref{open_loop_problem} attains a solution at each time step.

One way to satisfy this requirement is by manipulating the sampling time $t_S$. Our theoretic results in \cite{tzorakoleftherakis2017iterative} show that---at least, for Bolza form control objectives---as long as the contractive constraint depends on $t_s$, there exists $t_s$ as $t_s \rightarrow 0$ that guarantees that there is  always a finite-duration single control action to satisfy the constraint requirement.

\section{ Conclusions }
\label{conclusion}

In this paper, we exploited the advantages of hybrid systems theory to formulate a receding-horizon ergodic control algorithm that can perform real-time coverage and target localization, adaptively using sensor feedback to update the expected information density. In target localization, this ergodic  motion planning strategy  controls the robots to track a  non-parameterized information distribution across the terrain instead of individual targets independently, thus being completely decoupled from the estimation process and the number of targets.   We demonstrated---in simulation with a 12-dimensional UAV model and in experiment using the \texttt{sphero SPRK} robot---that ergodically controlled robotic agents can reliably track moving targets in real time based on bearing-only measurements even when the number of targets is not known a priori and the targets' motion is only modeled as a diffusion process. 
Finally, the simulation and experiment examples served to highlight the importance of and to verify stability of the ergodic controls with respect to the expected information density, as proved in our theoretical results. 

\appendices

\section{Proof}
\label{proof}
The proof of Proposition~\ref{new_dJ_dl} is as follows. To make explicit the dependence on action $A$, we write inputs \mbox{$u: \mathbb{R} \times \mathbb{R^+} \times \mathbb{R} \times U \rightarrow U$} of the form of $u_{i}^*(t)$ in \eqref{open_loop_problem} as 
\begin{gather}
u(t;\lambda_A,\tau_A, u_A) = \begin{cases} 
      u_A & \tau_A\leq t\leq \tau_A+\lambda_A \\
      u_i^{def} & \text{else.}
   \end{cases} \notag
\end{gather}
When $\lambda_A=0$, it is $u(t;0,\cdot,\cdot)\equiv u_i^{def}$, i.e., no action is applied. Accordingly, we define $\bar{J_{\mathcal{E}}}(\lambda_A, \tau_A, u_A):= J_{\mathcal{E}}(x(t;t_0, x_{0}, u(t;\lambda_A,\tau_A, u_A) ))$ so that the performance cost depends directly on the application parameters of an  action $A$. Assuming $t_0^{erg}=t_0$ and defining $\beta:= \frac{1}{t_i+T-t_0}\int\limits_{t_0}^{t_i+T}{F_k(x(t)) dt}-\phi_k$, it is

\begin{equation}
\label{ergodic_dj_dl}
\frac{\partial J_{\mathcal{E}}}{\partial \lambda} = \frac{\partial J_{\mathcal{E}}}{\partial \beta} \frac{\partial \beta}{\partial \lambda}
\end{equation}
where 
\begin{equation}
 \frac{\partial J_{\mathcal{E}}}{\partial \beta} = 2 Q \sum\limits_{k\in  \mathcal{K}} \Lambda_k \bigg[ \underbrace{ \frac{1}{t_i+T-t_0} \int\limits_{t_0}^{t_i+T}{F_k(x(\sigma)) d\sigma}}_{c_k^i}-\phi_k  \bigg]
\end{equation}
and
\begin{equation}
 \frac{\partial \beta}{\partial \lambda} = \frac{1}{t_i+T-t_0} \int\limits_{\tau}^{t_i+T} \frac{\partial F_k(x(t))}{\partial x(t)} \frac{\partial x(t)}{\partial \lambda} dt
\end{equation}
where the integral boundary changed from $t_0$ to $\tau$  because the derivative of $x(t)$ with respect to $\lambda$ is zero when $t<\tau$. Then, expression \eqref{ergodic_dj_dl} can be rearranged, pulling $\frac{\partial J_{\mathcal{E}}}{\partial \beta}$ into the  integral over $t$, and switching
the order of the integral and summation, to the following:
\begin{equation}
\label{ell}
\frac{\partial J_{\mathcal{E}}}{\partial \lambda} = \int\limits_{\tau}^{t_i+T} \underbrace{\frac{2 Q }{t_i+T-t_0} 
\sum\limits_{k\in  \mathcal{K}} \bigg \{\Lambda_k \big[ c_k^i-\phi_k  \big]
    \frac{\partial F_k(x(t))} {\partial x(t)} \bigg\}}_{\ell(t,x)} \frac{\partial x(t)}{\partial \lambda}             dt
\end{equation}
with\footnote{Expression \eqref{dxdt} is a direct result of applying the fundamental theorem of calculus on the system equations \eqref{f}.}
\begin{equation}
\label{dxdt}
\frac{\partial x(t)}{\partial \lambda} = \Phi(t,\tau) \Big[ f(\tau,x(\tau),u_A)-f(\tau,x(\tau),u_i^{def}(\tau)) \Big]
\end{equation}
where $\Phi(t,\tau)$ is the state transition matrix of the linearized system dynamics \eqref{f} with $A = D_xf$. 
Therefore,
\begin{equation}
\frac{\partial J_{\mathcal{E}}}{\partial \lambda}  = \int\limits_{\tau}^{t_i+T} \ell(t,x) \cdot \Phi(t,\tau) dt \;\cdot \Big[ f(\tau,x(\tau),u_A)-f(\tau,x(\tau),u_i^{def}(\tau)) \Big].
\end{equation}
Finally, notice that $\int\limits_{\tau}^{t_i+T} \ell(t,x) \cdot \Phi(t,\tau) dt$ is the convolution equation for the system
\begin{gather}
\dot \rho_{\mathcal{E}} = -\ell(t,x)^T - D_{x}f\Big(t,x,u_i^{def}\Big)^T \rho_{\mathcal{E}} \\
\text{subject to\;\;} \rho_{\mathcal{E}}(t_i+T) = \mathbf{0} \notag 
\end{gather}
where $\ell$ is defined in \eqref{ell}.
Therefore, we end up with the expression for the mode insertion gradient of the ergodic cost at time $\tau$:
\begin{equation}
\frac{\partial J_{\mathcal{E}}}{\partial \lambda}\bigg|_{\tau} = \rho_{\mathcal{E}}(\tau) \; \Big[ f(\tau,x(\tau),u_A)-f(\tau,x(\tau),u_i^{def}(\tau)) \Big].
\end{equation}


\bibliographystyle{IEEEtran}
\bibliography{ergodic}

\end{document}